\newcommand{\comment}[1]{}
\newtheorem{theorem}{Theorem}
\newtheorem{lemma}{Lemma}
\newtheorem{assumption}{Assumption}
\def\eqref#1{equation~\ref{#1}}
\def\1{\mathbbm{1}}
\DeclareMathAlphabet{\mathsfit}{\encodingdefault}{\sfdefault}{m}{sl}
\SetMathAlphabet{\mathsfit}{bold}{\encodingdefault}{\sfdefault}{bx}{n}
\DeclareMathOperator*{\argmin}{arg\,min}
\newcommand{\PLOT}{$\mathsf{PLOT}$}
\renewcommand{\eqref}[1]{Eq. (\ref{#1})}
\newtheorem{corollary}{Corollary}
\newtheorem{definition}{Definition}
\newtheorem{remark}{Remark}
\theoremstyle{definition}
\title{Neural Pseudo-Label Optimism for the Bank Loan Problem}
\author{%
  Aldo Pacchiano$\thanks{Equal contribution.}$\\
 Microsoft Research \\
  \And
  Shaun Singh$\footnotemark[1]$ \\
  FAIR\\
  \And
  Edward Chou\\
  FAIR\\
  \And
  Alexander C. Berg\\
  FAIR\\
  \And
  Jakob Foerster\\
  FAIR
}
\begin{document}

\maketitle

\begin{abstract}
     We study a class of classification problems best exemplified by the \emph{bank loan} problem, where a lender decides whether or not to issue a loan. The lender only observes whether a customer will repay a loan if the loan is issued to begin with, and thus modeled decisions affect what data is available to the lender for future decisions. As a result, it is possible for the lender's algorithm to ``get stuck'' with a self-fulfilling model. This model never corrects its false negatives, since it never sees the true label for rejected data, thus accumulating infinite regret. In the case of linear models, this issue can be addressed by adding optimism directly into the model predictions. However, there are few methods that extend to the function approximation case using Deep Neural Networks. We present Pseudo-Label Optimism (PLOT), a conceptually and computationally simple method for this setting applicable to DNNs. \PLOT{} adds an optimistic label to the subset of decision points the current model is deciding on, trains the model on all data so far (including these points along with their optimistic labels), and finally uses the resulting \emph{optimistic} model for decision making. \PLOT{} achieves competitive performance on a set of three challenging benchmark problems, requiring minimal hyperparameter tuning. We also show that \PLOT{} satisfies a logarithmic regret guarantee, under a Lipschitz and logistic mean label model, and under a separability condition on the data. 
\end{abstract}

\section{Introduction}\label{section::introduction}
Binary classification models are used for online decision-making in a wide variety of practical settings. These settings include bank lending \citep{tsai2010credit,kou2014mcdm,tiwari2018machine}, criminal recidivism prediction \citep{tollenaar2013method,wang2010predicting,berk2017impact}, credit card fraud \citep{chan1999distributed,srivastava2008credit}, spam detection \citep{jindal2007review,sculley2007practical}, self-driving motion planning \citep{paden2016survey,lee2014local}, and recommendation systems \citep{pazzani2007content,covington2016deep,he2014practical}. In many of these tasks, the model only receives the true labels for examples accepted by the learner. We refer to this class of online learning problem as the \emph{bank loan problem} (BLP), motivated by the following prototypical task. A learner interacts with an online sequence of loan applicants. At the beginning of each time-step the learner receives features describing a loan applicant, from which the learner decides whether or not to issue the requested loan. If the learner decides to issue the loan, the learner, at some point in the future, observes whether or not the applicant repays the loan. If the loan is not issued, no further information is observed by the learner. The process is then repeated in the subsequent time-steps. The learner's objective is to maximize its reward by handing out loans to as many applicants that can repay them and deny loans to applicants unable to pay them back. %

The BLP can be viewed as a contextual bandit problem with two actions: accepting/rejecting a loan application. Rejection carries a fixed, known, reward of 0: with no loan issued, there is no dependency on the applicant. If in contrast the learner accepts a loan application, it receives a reward of $1$ in case the loan is repaid and suffers a loss of $-1$ if the loan is not repaid. The probabilities of repayment associated to any individual are not known in advance, thus the learner is required to build an accurate predictive model of these while ensuring not too many loans are handed out to individuals who can't repay them and not too many loans are denied to individuals that can repay. This task can become tricky since the samples used to train the model, which govern the model's future decisions, can suffer from bias as they are the result of \emph{past predictions} from a potentially incompletely trained model. In the BLP setting, a model can get stuck in a \emph{self-fulfilling} false rejection loop, in which the very samples that \emph{could} correct the erroneous model never enter the training data in the first place because they are being rejected by the model.

Existing contextual bandit approaches typically assume a known parametric form on the reward function. With restrictions on the reward function, a variety of methods (\citep{filippi2010parametric, chu2011unbiased}) introduce strong theoretical guarantees and empirical results, both in the linear and generalized linear model settings\footnote{In a linear model the expected response of a point $\mathbf{x}$ satisfies $\bar{y}= \mathbf{x}^\top \boldsymbol{\theta}$, whereas in a generalized linear model the expected response satisfies $\bar{y} = \mu(\mathbf{x}^\top \boldsymbol{\theta} )$ for some non-linearity $\mu$, typically assumed to be the logistic function.}. These methods often make use of end-to-end optimism, incorporating uncertainty about the reward function in both the reward model and decision criteria. 

In practice, however, deep neural networks (DNNs) are often used to learn the binary classification model \citep{riquelme2018deep, zhou2020neural}, presenting to us a scenario that is vastly richer than the linear and generalized linear model assumptions of many contextual bandit works. In the static setting these methods have achieved effective practical performance, and in the case of \cite{zhou2020neural}, theoretical guarantees. A large class of these methods use two components: a feature extractor, and a post-hoc exploration strategy fed by the feature extractor. These methods leave the feature extractor itself open to bias, with the limited post-hoc exploration strategy. Another class of methods incorporate uncertainty into the neural network feature extractor (\cite{osband2018randomized}), building on the vast literature on uncertainty estimation in neural networks. 

We introduce an algorithm, \PLOT{} (see Algorithm~\ref{alg:GLM_function_approx}), which explicitly trains DNNs to make optimistic online decisions for the BLP, by incorporating \emph{optimism} in both representation learning and decision making. The intuition behind \PLOT{}'s optimistic optimization procedure is as follows:
\begin{center}
    ``If I trained a model with the query point having a positive label, would it predict it to be positive?" 
\end{center}
If the answer to this question is yes, \PLOT{} would accept this point. To achieve this, at each time step, the \PLOT{} algorithm re-trains its base model, treating the new candidate points as if they had already been accepted and temporarily adds them to the existing dataset with positive \emph{pseudo-labels}. \PLOT{}'s accept and reject decisions are based on the predictions from this optimistically retrained base model. The addition of the fake pseudo-labeled points prevents the emergence of \emph{self-fulfilling false negatives}. In contrast to false \emph{rejections}, any false \emph{accepts} introduced by the pseudo-labels are \emph{self-correcting}. Once the model has (optimistically) accepted enough similar data points and obtained their true, negative label, these will overrule the impact of the optimistic label and result in a reject for novel queries.

While conceptually and computationally simple, we empirically show that \PLOT{} obtains competitive performance across a set of 3 different benchmark problems in the BLP domain. With minimal  hyperparameter tuning, it matches or outperforms greedy, $\epsilon$-greedy (with a decaying schedule) and state-of-the-art methods from the literature such as NeuralUCB\cite{zhou2020neural}.
Furthermore, our analysis shows that \PLOT{} is 3-5 times more likely to \emph{accept} a data point that the current model rejects if the data point is indeed a \emph{true accept}, compared to a \emph{true reject}. 

\subsection{Related Work}

\paragraph{Contextual Bandits} As we formally describe in Section~\ref{section:setting}, the BLP can be formulated as a contextual bandit. Perhaps the most related setting the BLP in the contextual bandits literature is the work of~\cite{metevier2019offline}. In their paper the authors study the loan problem in the presence of offline data. In contrast with our definition for the BLP, their formulation is not concerned with the online decision making component of the BLP, the main point of interest in our formulation. Furthermore, their setting is also concerned with studying ways to satisfy fairness constraints, an aspect of the loan problem that we consider completely orthogonal to our work. Other recent works have forayed into the analysis and deployment of bandit algorithms in the setting of function approximation. Most notably the authors of~\cite{riquelme2018deep} conduct an extensive empirical evaluation of existing bandit algorithms on public datasets. A more recent line of work has focused on devising ways to add optimism to the predictions of neural network models. Methods such as Neural UCB and Shallow Neural UCB~\cite{zhou2020neural,xu2020neural} are designed to add an optimistic bonus to the model predictions, that is a function of the representation layers of the model. Their theoretical analysis is inspired by insights gained from Neural Tangent Kernel (NTK) theory. Other recent works in the contextual bandit literature such as~\cite{foster2020beyond} have started to pose the question of how to extend theoretically valid guarantees into the function approximation scenario, so far with limited success~\cite{sen2021top}. A fundamental component of our work that is markedly different from previous approaches to is to explicitly encourage optimism throughout representation learning, rather than post-hoc exploration on top of a non-optimistic representation.

\paragraph{Learning with Abstention}
The literature on learning with abstention shares many features with our setting. In this literature, an online learner can choose to abstain from prediction for a fixed cost, rather than incurring arbitrary regret\citep{pmlr-v80-cortes18a}. In our setting, a rejected point always receives a constant reward, similar to learning with abstention. However, here, regret in the BLP is measured against the potential reward, rather than against a fixed cost. Although the BLP itself does not naturally admit abstention, the extension of PLOT to abstention setting is an interesting future problem.

\paragraph{Repeated Loss Minimization} A closely related problem setting to the BLP (see Section~\ref{section:setting}) is Repeated Loss Minimization. Previous works~\cite{hashimoto2018fairness} have studied the problem of repeated classification settings where the acceptance or rejection decision produces a change in the underlying distribution of individuals faced by the decision maker. In their work, the distributional shift induced by the learner's decisions is assumed to be intrinsic to the dynamics of the world. This line of work has recently garnered a flurry of attention and inspired the formalization of different problem domains
such as strategic classification~\cite{hardt2016strategic} and performative prediction~\cite{perdomo2020performative,miller2021outside}. A common theme in these works is the necessity of thinking strategically about the learner's actions and how these may affect its future decisions as a result of the world reacting to them.  In this paper we focus on a different set of problems encountered by decision makers when faced with the BLP in the presence of a reward function. We do not treat the world as a strategic decision maker, instead we treat the distribution of data points presented to the learner as fixed, and focus on understanding the effects that making online decisions can have on the future accuracy and reward experienced by an agent engaged in repeated classification. The main goal in this setting is to devise methods that allow the learner to get trapped in false rejection or false acceptance cycles that may compromise its reward.  Thus, the learner's task is not contingent on the arbitrariness of the world, but on a precise understanding of its own knowledge of the world. %

\paragraph{Learning with partial feedback} In \cite{sculley2007practical}, the authors study the one-sided feedback setting for the application of email spam filtering and show that the approach of \cite{helmbold2000apple} was less effective than a simple greedy strategy. The one-sided feedback setting has in common with our definition of the BLP the assumption that an estimator of the instantaneous regret is only available in the presence of an accept decision. The main difference between the setting of~\cite{helmbold2000apple} and ours is that the BLP is defined in the presence of possibly noisy labels. Moreover our algorithm \PLOT{} can be used with powerful function approximators, a setting that goes beyond the simple label generating functions studied in~\cite{helmbold2000apple}. In a related work \cite{bechavod2019equal} considers the problem of one-sided learning in the group-based fairness context with the goal of satisfying equal opportunity \cite{hardt2016equality} at every time-step. They consider convex combinations over a finite set of classifiers and arrive at a solution which is a randomized mixture of at most two of these classifiers. Moving beyond the single player one-sided feedback problem \cite{cesa2006regret} studies a setting which generalizes the one-sided feedback, called {\it partial monitoring}, through considering repeated two-player games in which the player receives a feedback generated by the combined choice of the player and the environment, proposing a randomized solution.  \cite{antos2013toward} provides a classification of such two-player games in terms of the regret rates attained and \cite{bartok2012partial} study a variant of the problem with side information.

\section{Setting}\label{section:setting}

We formally define the bank loan problem (BLP) as a sequential contextual binary decision problem with conditional labels, where the labels are only observed when datapoints are accepted. In this setting, a decision maker and a data generator interact through a series of time-steps, which we index by time $t$. At the beginning of every time-step $t$, the decision maker receives a data point $\mathbf{x}_t \in \mathbb{R}^d$  and has to decide whether to accept or reject this point. The label $y_t \in \{ 0,1\}$, is only observed if the data point is accepted. If the datapoint is rejected the learner collects a reward of zero. If instead the datapoint is accepted the learner collects a reward of $1$ if $y_t = 1$ and $-1$ otherwise.
Here, we focus our notation and discussion on the setting where only one point is acted upon in each time-step. All definitions below can be extended naturally to the batch scenario, where a learner receives a batch of data points $\mathbf{x}_{t,1}, \cdots, \mathbf{x}_{t, B}$, and sees labels for accepted points $(y_{t, 1} \cdots, y_{t, B^{'}}), y_t \in \{ 0,1\}$ in each time-step, where $B$ is the size of the batch, and $B^{'}$ is the number of accepted points. 

In the BLP, contexts (unlabeled batches of query points) do not have to be IID -- their distributions, $\mathcal{P}(\mathbf{x})$, may change adversarially. As a simple example, the bank may only see applicants for loans under \$1000, until some time point t, where the bank sees applicants for larger loans. Although contexts do not have to be IID, we assume that the reward function itself is always stationary -- i.e. the conditional distribution of the responses,  $\mathcal{P}(y|\mathbf{x})$, is fixed for all $t$. Finally, in the BLP, it is common for rewards to be delayed -- e.g. the learner does not observed the reward for its decision at time $t$ until some later time, $t^{'}$. In this work, we assume that rewards are received immediately, as a wide body of work exists for adapting on-line learning algorithms to delayed rewards \citep{2005delayed}.

\paragraph{Reward} The learner's objective is to maximize its cumulative accuracy, or the number of correct decisions it has made during training, as in ~\citep{kilbertus2020fair}. In our model, if the learner accepts the datapoint, $x_t$, it receives a reward of $2y_t -1$, where $y_t$ is a binary random variable, while a rejection leads to a reward of $0$. Concretely, in the loan scenario, this reward model corresponds to a lender that makes a unit gain for a repaid loan, incurs a unit loss for an unpaid loan and collects zero reward whenever no loan is given.

\paragraph{Contextual Bandit Reduction}
The BLP can be easily modeled as a specific type of contextual bandit problem~\cite{NIPS2007_4b04a686}, where at the beginning of every time-step $t$ the learner receives a context $\mathbf{x}_t$, and has the choice of selecting one of two actions $\{ \mathrm{Accept}, \mathrm{Reject}\}$. It is easy to see this characterization implies an immediate reduction of our problem into a two-action contextual bandit problem, where the payout of one of the actions is known by the learner ($r_{t}(\mathrm{Reject}, x_{t}) = 0$). To distinguish the problem we study from a more general contextual bandits setting, we refer to our problem setting as the bank-loan problem (BLP).

\section{Pseudo-Label Optimism}\label{section:neural.bandits}

\paragraph{Overview}
In this section, we describe \PLOT{} in detail, and provide theoretical guarantees for the method. Recall the discussion from Section~\ref{section::introduction} where we described the basic principle behind the \PLOT{}
Algorithm. At the beginning of each time-step $t$, \PLOT{} retrains its base model by adding the candidate points with positive pseudo-labels into the existing data buffer. The learner then decides whether to accept or reject the candidate points by following the predictions from this optimistically trained model. Although the implementation details of \PLOT{} (see Algorithm~\ref{alg:GLM_function_approx}) are a bit more involved than this, the basic operating principle behind the algorithm remains rooted in this very simple idea. 

Primarily, \PLOT{} aims to provide similar guarantees as the existing contextual bandit literature, generalized to the function approximation regime. To do so, we rely on the following realizability assumption for the underlying neural model and the distribution of the labels:

\begin{assumption}[Labels generation]\label{assumption::label_generation}
We assume the labels $y_t \in \{ 0,1\}$ are generated according to the following model:
\begin{equation}\label{equation::glm_assumption}
    y_t = \begin{cases}
            1 & \text{with probability } \mu( f_{\theta_\star}(\mathbf{x}_t)) \\
            0 & \text{o.w.}
        \end{cases}
\end{equation}
For some function $f_{\boldsymbol{\theta}_\star}: \mathbb{R}^d \rightarrow \mathbb{R}$ parameterized by $\boldsymbol{\theta}_\star \in \Theta$ and where $\mu(z)= \frac{\exp(z)}{1 + \exp(z)}$ is the logistic link function.  We denote the function class parameterized by $\boldsymbol{\theta} \in \Theta$ as $\mathcal{F}_\Theta = \{ f_{\boldsymbol{\theta}} \text{ s.t. } \boldsymbol{\theta} \in \Theta\}$. 
\end{assumption}

For simplicity, we discuss \PLOT{} under the assumption that $\mathcal{F}_{\Theta}$ is a parametric class. Our main theoretical results of Theorem~\ref{theorem::theorem.PLOT} hold when this parameterization is rich enough to encompass the set of all constant functions.

In \PLOT{} the learner's decision at time $t$ is parameterized by parameters, $\boldsymbol{\theta}_t$ and a function $f_{\boldsymbol{\theta}_t}$ and takes the form:
\begin{equation*}
 \text{If }   f_{\boldsymbol{\theta}_t}(\boldsymbol{x}_t) \geq 0 \text{ accept } 
\end{equation*}
 
We call the function $f_{\boldsymbol{\theta}_{t}}$ the learner's model.  We denote by $a_t \in \{0,1\}$ the indicator of whether the learner has decided to accept (1) or reject (0) data point $\mathbf{x}_t$. We measure the performance of a given decision making procedure by its pseudo-regret\footnote{The prefix pseudo in the naming of the pseudo-regret is a common moniker to indicate the reward considered is a conditional expectation. It has no relation to the pseudo-label optimism of our methods.}:

\begin{equation*}\label{equation::pseudo_regret}
    \mathcal{R}(t) = \sum_{\ell=1}^t \max(0, 2\mu( f_{\boldsymbol{\theta}_\star}(\mathbf{x}_\ell))-1) - a_\ell(2\mu(f_{\boldsymbol{\theta}_\star}(\mathbf{x}_\ell)) - 1)
\end{equation*}
For all $t$ we denote the pseudo-reward received at time $t$ as $r_t =  a_t(2\mu(f_{\boldsymbol{\theta}_\star}(\boldsymbol{x}_t)) - 1)$.  The optimal pseudo-reward at time $t$ equals $0$ if $f_{\boldsymbol{\theta}_\star}(\boldsymbol{x}_t) \leq 0$ and $2\mu(f_{\boldsymbol{\theta}_\star}(\boldsymbol{x}_t))-1$ otherwise. Minimizing regret is a standard objective in the online learning and bandits literature (see~\cite{lattimore2020bandit}). As a consequence of Assumption~\ref{assumption::label_generation}, the \emph{optimal} reward maximizing decision rule equals the true model $f_{\boldsymbol{\theta}_\star}$.

In order to show theoretical guarantees for our setting we will work with the following realizability assumption. %

\begin{assumption}[Neural Realizability]\label{assumption:neural_logistic}
There exists an $L-$Lipschitz function $f_{\boldsymbol{\theta}_\star}: \mathbb{R}^d \rightarrow \mathbb{R}$ such that for all $\mathbf{x}_t \in \mathbb{R}^d$ :
\begin{equation}\label{equation::function_approximation_logistic_model}
    y_t =  \begin{cases}
            1 & \text{with probability } \mu( f_{\boldsymbol{\theta}_\star}(\mathbf{x}_t)) \\
            0 & \text{o.w.}
        \end{cases}
\end{equation}

\end{assumption}

Recall that in our setting the learner interacts with the environment in a series of time-steps $t=1, \ldots T,$ observing points $\{\mathbf{x}_t\}_{t=1}^{T} $ and labels $y_t$ only during those timesteps when $a_t = 1$. The learner also receives an expected reward of $a_t(2\mu(f_{\boldsymbol{\theta}_\star}(\mathbf{x}_t))-1)$, a quantity for which the learner only has access to an unbiased estimator. Whenever Assumption~\ref{assumption:neural_logistic} holds, a natural way of computing an estimator $\widehat{\boldsymbol{\theta}}_t$ of $\boldsymbol{\theta}_\star$ is via maximum-likelihood estimation. If we denote by $\mathcal{D}_t = \{ (\mathbf{x}_\ell, y_\ell) \text{ s.t.} a_\ell = 1 \text{ and } \ell \leq t-1\}$ as the dataset of accepted points up to the learners decision at time $t-1$, the regularized log-likelihood (or negative cross-entropy loss) can be written as,

\begin{align*}
    \mathcal{L}^\lambda(\boldsymbol{\theta} | \mathcal{D}_t) &= \sum_{(\mathbf{x}, y )\in \mathcal{D}_t}    -y \log\left(\mu( f_{\boldsymbol{\theta}}(\boldsymbol{x})) \right)-  (1-y)\log\left(1-\mu( f_{\boldsymbol{\theta}}(\mathbf{x})\right)   +  \frac{\lambda}{2} \| \boldsymbol{\theta} \|_2^2
\end{align*}

\paragraph{The Realizable Linear Setting} If $f_{\boldsymbol{\theta}}(\boldsymbol{x}) = \boldsymbol{x}^\top \boldsymbol{\theta}$, the BLP can be reduced to a generalized linear contextual bandit problem (see~\cite{filippi2010parametric}) via the following reduction. At time $t$, the learner observes a context of the form $\{ \mathbf{0}, \boldsymbol{x}_t\}$. In this case, the payoff corresponding to a $\mathrm{Reject}$ decision can be realized by action $\mathbf{0}$ for all models in $\mathcal{F}_{\Theta}$.

Unfortunately, this reduction does not immediately apply to the neural realizable setting. In the neural setting, there may not exist a vector $\mathbf{z} \in \mathbb{R}^d$ with which to model the payoff of the $\mathrm{Reject}$ action known a priori to satisfy $f_{\mathbf{\theta}_\star}(\mathbf{z} ) = 0$. Stated differently, there may not exist an easy way to represent the bank loan problem as a natural instance of a two action contextual bandit problem with the payoffs fully specified by the neural function class at hand. We can get around this issue here, because in the BLP it is enough to compare the model's prediction with the neutral probability $1/2$. Although we make Assumption~\ref{assumption:neural_logistic} for the purpose of analyzing and explaining our algorithms, in practice it is not necessary that this assumption holds.

Just as in the case of generalized linear contextual bandits, utilizing the model given by $f_{\hat{\boldsymbol{\theta}}_t}$ may lead to catastrophic under estimation of the true response $\mu(f_{\boldsymbol{\theta}_\star}(\mathbf{x}_t))$ for any query point $\mathbf{x}_t$. The core of \PLOT{} is a method to avoid the emergence of self-fulfilling negative predictions. We do so using a form of implicit optimism, resulting directly from the optimization of a new loss function, which we call the optimistic pseudo-label loss. 

\begin{definition}[Optimistic Pseudo-Label Loss]
Let $\mathcal{D} = \{ (\mathbf{x}_\ell, y_\ell)\}_{\ell =1}^N$ be a dataset consisting of \emph{labeled} datapoints $\mathbf{x}_{\ell} \in \mathbb{R}^d$ and responses $y_\ell \in \{ 0,1\}$ and let $\mathcal{B} = \{ \mathbf{x}^{(j)} \}_{j=1}^B \subset \mathbb{R}^d$ be a dataset of \emph{unlabeled} data points. We define the optimistic pseudo-label loss of $(\mathcal{D}, \mathcal{B} ) $ as,
\begin{small}
\begin{align*}
    \mathcal{L}^\mathcal{C}(\boldsymbol{\theta} | \mathcal{D}, \mathcal{B} , W, R) &= \mathcal{L}^\lambda(\boldsymbol{\theta} | \mathcal{D} \cup \{ (\mathbf{x}, 1) \text{ for } \mathbf{x} \in \mathcal{B} \} ) \\
    &=\underbrace{ \sum_{(\mathbf{x}, y)\in \mathcal{D}(R, \mathcal{B})} -y\log\left(\mu( f_{\boldsymbol{\theta}}(\mathbf{x})\right)) - (1-y) \log(1-\mu(f_{\boldsymbol{\theta}}(\mathbf{x}) ))}_{\text{cross-entropy loss}} + \\
    &\quad W\underbrace{ \sum_{\boldsymbol{x} \in \mathcal{B}} \log(\mu(f_{\boldsymbol{\theta}}(\mathbf{x}))}_{\text{pseudo-label loss}} +  \frac{\lambda \| \boldsymbol{\theta}\|^2}{2}
\end{align*}
\end{small}
Where $W>0$ is a weighting factor, $R> 0$ is the 'focus' radius and $\mathcal{D}(R, \mathcal{B}) = \{ (\mathbf{x}, y) \in \mathcal{D} \text{ s.t. } \exists \mathbf{x}' \in \mathcal{B} \text{ with } \| \mathbf{x} - \mathbf{x}' \| \leq R \} $.
\end{definition}

Let's take a closer look at the optimistic pseudo-label loss. Given any pair of labeled-unlabeled datasets $(\mathcal{D}, \mathcal{B})$, optimizing for the optimistic loss $\mathcal{L}^\mathcal{C}(\theta | \mathcal{D}, \mathcal{B}, W ) $ corresponds to minimizing the cross-entropy of a dataset of pairs $(\mathbf{x}, y)$ of the form $(\mathbf{x}, y )\in \mathcal{B}$ or $(\mathbf{x}, 1)$ such that $\mathbf{x} \in \mathcal{B}$. In other words, the minimizer of $\mathcal{L}^C(\boldsymbol{\theta} | \mathcal{D}, \mathcal{B}, W) $ aims to satisfy two objectives: 
\begin{enumerate}
    \item Minimize error on the labeled data
    \item Maximize the likelihood of a positive label for the unlabeled points in $\mathcal{B}$.
\end{enumerate}
 The model $\widehat{\boldsymbol{\theta}}^{\mathcal{C}}$ resulting from minimizing $\mathcal{L}^{\mathcal{C}}$ will therefore strive to be optimistic over $\mathcal{B}$ while keeping a low loss value, and consequently a high accuracy (when Assumption~\ref{assumption:neural_logistic} holds) over the true labels of the points in $\mathcal{D}$. We note that if $\mathcal{D}$ is much larger than $W\mathcal{B}$ (the weighted size of $\mathcal{B}$), optimizing $\mathcal{L}^{\mathcal{C}}$ will favor models that are accurate over $\mathcal{D}$ instead of optimistic over $\mathcal{B}$. Whenever $|\mathcal{D} | \ll W|\mathcal{B} |$, the opposite is true. 

\paragraph{PLOT{} Algorithm} 
Based on these insights, we design Pseudo-Labels for Optimism (\PLOT{}), an algorithm that utilizes the optimistic pseudo-label loss to inject the appropriate amount of optimism into the learner's decisions: high for points that have not been seen much during training, and low for those points whose acceptance may cause a catastrophic loss increase over the points accepted by the learner so far.

\begin{algorithm}[h]
\textbf{Input:} $\epsilon-$greedy exploration parameter, weight schedule $\{W_{t}\}_{t=1}^\infty$, radius $R$ \\
Initialize accepted dataset $\mathcal{D}_1 = \emptyset$ \\
\For{$t=1, \cdots T$}{
1. Observe batch $\mathcal{B}_t = \{ \mathbf{x}_t^{(j)}\}_{j=1}^B$ and sample $Q_t = \{ q_j \}_{j=1}^B$ such that $q_j \stackrel{i.i.d.}{\sim} \mathrm{Ber}(\epsilon)$ \\
2. Build the MLE estimator,
\begin{equation*}
    \widehat{\boldsymbol{\theta}}_t = \min_{\boldsymbol{\theta}}\mathcal{L}_t(\boldsymbol{\theta} | \mathcal{D}_t)
\end{equation*}\\
3. Compute the pseudo-label filtered batch $\widetilde{\mathcal{B}}_t = \{ (\mathbf{x}_t^{(j)}, 1) \text{ s.t. } f_{\widehat{\boldsymbol{\theta}}_{t}}( \mathbf{x}_t^{(j)} ) < 0 \text{ and } q_j =1 \}$. \\
4. Calculate the minimizer of the empirical optimistic pseudo-label loss,
\begin{equation*}
    \widehat{\boldsymbol{\theta}}_t^\mathcal{C} = \min_{\boldsymbol{\theta}} \mathcal{L}_t^{\mathcal{C}}(\boldsymbol{\theta} | \mathcal{D}_{t} , \widetilde{\mathcal{B}}_t, W_t, R), 
\end{equation*}
3. For all $\mathbf{x}^{(j)} \in \mathcal{B}_t$ compute acceptance decision via $a^{(j)}_t = \begin{cases}1 &\text{if } f_{\widehat{\boldsymbol{\theta}}^{\mathcal{C}}_{t}}(\mathbf{x}_t^{(j)}) \geq 0 \\ 
0 &\text{o.w.}\end{cases}$ \\
4. Update $\mathcal{D}_{t+1} \leftarrow \mathcal{D}_t \cup \{ (\mathbf{x}^{(j)}_t, y_t^{(j)}) \}_{ j \in \{1, \cdots, B\} \text{ s.t.  } a_t^{(j)} = 1} $.

}

\caption{Pseudo-Labels for Optimism (\PLOT{})}
\label{alg:GLM_function_approx}
\end{algorithm}

During the very first time-step ($t=1$), \PLOT{} accepts all the points in $\mathcal{B}_t$. In subsequent time-steps \PLOT{} makes use of a dual $\epsilon-$greedy and MLE-greedy filtering subroutine to find a subset of the current batch composed of those points that are \emph{both} currently being predicted as rejects by the MLE estimator and have been selected by the $\epsilon-$greedy schedule (see step 3 of \PLOT{}). 

This dual filtering mechanism ensures that only a small proportion (based on the $\epsilon$) of the datapoints are ever considered to be included into the empirical optimistic pseudo-label loss. The MLE filtering mechanism further ensures that not all the points selected by $\epsilon-$greedy are further investigated, but only those that are currently being rejected by the MLE model. This has the effect of preventing the pseudo-label filtered batch from growing too large. 

As we have mentioned above, the relative sizes of the labeled and unlabeled batches has an effect on the degree of optimism the algorithm will inject into its predictions. As the size of the collected dataset grows, the inclusion of $\widetilde{\mathcal{B}}_t$, has less and less effect on $\widehat{\boldsymbol{\theta}}_t^{\mathcal{C}}$. In the limit, once the dataset is sufficiently large and accurate information can be inferred about the true labels, the inclusion of $\widetilde{\mathcal{B}}_t$ into the pseudo-label loss has vanishing effect. The later has the beneficial effect of making false positive rate decrease with $t$.

The following guarantee shows that in the case of separable data satisfying Assumption~\ref{assumption:neural_logistic}, the \PLOT{} Algorithm initialized with the right parameters $R, \{ W_t\}_{t=1}^\infty$ satisfies a logarithmic regret guarantee.

\begin{theorem}\label{theorem::theorem.PLOT}
Let $\mathcal{P}$ be a distribution over data point, label pairs $(\mathbf{x}, y)$ satisfying 
\begin{enumerate}
    \item All $\mathbf{x} \in \mathrm{supp}(\mathcal{P})$ are bounded $\| \mathbf{x}\| \leq B$. 
    \item The conditional distributions of the labels $y$ satisfy the data generating  Assumption~\ref{assumption:neural_logistic} with $\mathcal{F}$ a class of $L-$Lipschitz functions containing all constant functions.
    \item $|f_{\boldsymbol{\theta}_\star}(\mathbf{x})| \geq \tau >0$  holds for all $\mathbf{x}\in \mathrm{supp}(\mathcal{P})$.  
\end{enumerate}
Let the marginal distribution of $\mathcal{P}$ over points $\mathbf{x}$ be $\mathcal{P}_\mathcal{X}$ and let's assume the \PLOT{} algorithm will be used in the presence of i.i.d. data such that $\mathbf{x}_t \sim \mathcal{P}_{\mathcal{X}}$ independently for all $t \in \mathbb{N}$. Define $ A_t = \sum_{ \ell=1}^{t-1} \mathbf{1}\left\{ \mathbf{x}_\ell \in B(\mathbf{x}_t, R)  \right\} $ and $D_t = \sum_{\ell=1}^{t-1}   y_\ell  \mathbf{1}\left\{ \mathbf{x}_\ell \in B\left(\mathbf{x}_t, R\right)  \right\} $ where $B(\mathbf{x}, R)$ corresponds to the $\| \cdot \|_2$ ball centered at $\mathbf{x}$ and radius $R$. Let $\delta' \in (0,1)$. If $R = \frac{\tau^2}{128 L} $, $ W_t = \max\left( 4 \sqrt{t \ln\left( \frac{6t^2\ln t}{\delta'}\right) }, \frac{\left(\frac{\mu(\tau)}{2} + \frac{1}{4}\right) A_t - D_t }{\frac{3}{4}-\frac{\mu(\tau)}{2} } \right) $ and $\epsilon = 1$, the \PLOT{} Algorithm with batch size $1$ satisfies for all $t \in \mathbb{N}$ simultaneously,
\begin{equation*}
    \mathcal{R}(t) \leq \widetilde{\mathcal{O}}\left(  \frac{1}{\tau p_\mathcal{X}} \ln\left(\frac{1}{\delta'}\right)\right)
\end{equation*}
With probability at least $1-\delta'$, where $p_\mathcal{X}$ is a parameter that only depends on the geometry of $\mathcal{P}_{\mathcal{X}}$ and $\widetilde{\mathcal{O}}$ hides logarithmic factors in $\tau$ and $p_\mathcal{X}$.

\end{theorem}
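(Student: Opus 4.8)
The plan is to reduce the per-step decision of \PLOT{} to a one-dimensional, \emph{local} Bernoulli estimation problem, and then to bound separately the regret incurred on true-accept points (which I will argue is zero) and on true-reject points (which I will argue is bounded by a quantity independent of $t$).

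First I would exploit the interplay between the margin, the Lipschitz constant, and the focus radius. Since $R = \tau^2/(128L)$, any two points in a common ball $B(\mathbf{x},R)$ satisfy $|f_{\boldsymbol{\theta}_\star}(\mathbf{x}) - f_{\boldsymbol{\theta}_\star}(\mathbf{x}')| \le LR = \tau^2/128 < \tau$, so by Assumption~\ref{assumption:neural_logistic} and the condition $|f_{\boldsymbol{\theta}_\star}|\ge\tau$ the sign of $f_{\boldsymbol{\theta}_\star}$ is constant on each such ball: every ball is ``pure'', containing only true-accepts (with $\mu(f_{\boldsymbol{\theta}_\star})\ge\mu(\tau)>1/2$) or only true-rejects (with $\mu(f_{\boldsymbol{\theta}_\star})\le 1-\mu(\tau)<1/2$). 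Because the optimistic loss $\mathcal{L}^{\mathcal{C}}$ restricts its labeled data to $\mathcal{D}(R,\mathcal{B}_t)$, i.e.\ to the ball around the query, and because $\mathcal{F}$ contains all constant functions while every member varies by at most $2LR$ over the ball, I would show that the optimistic prediction $f_{\widehat{\boldsymbol{\theta}}^{\mathcal{C}}_t}(\mathbf{x}_t)$ is, up to an $O(LR)$ slack that the $\tau$-margin absorbs, governed by the weighted empirical positive frequency $\tfrac{D_t+W_t}{A_t+W_t}$ in the ball. The accept test $f_{\widehat{\boldsymbol{\theta}}^{\mathcal{C}}_t}(\mathbf{x}_t)\ge 0$ then becomes a comparison of this quantity to $1/2$, so the decision reduces to tracking $A_t$, $D_t$ and $W_t$.

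Next I would build a time-uniform high-probability event. Viewing $\{y_\ell - \mu(f_{\boldsymbol{\theta}_\star}(\mathbf{x}_\ell))\}$ as a bounded martingale-difference sequence and applying a Freedman-type concentration together with a union bound over all rounds yields deviations of order $\sqrt{A_t\,\ln(6t^2\ln t/\delta')}$; this is precisely the origin of the floor term $4\sqrt{t\ln(6t^2\ln t/\delta')}$ appearing in $W_t$ and of the final $\ln(1/\delta')$ factor. On this event the empirical positive frequency in any pure ball lies on the correct side of $1/2$ once the number of local samples exceeds a threshold of order $\tfrac{1}{(2\mu(\tau)-1)^2}\ln(\cdot)$, and, crucially, it \emph{remains} correct for all larger $t$ — this time-uniformity is what will let me obtain a regret bound that does not grow with $t$.

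With these two ingredients the two ball types are handled as follows. On a true-accept ball the optimistic frequency only exceeds the plain frequency, which concentrates above $1/2$, so \PLOT{} accepts every point and contributes no regret; the tuned branch of $W_t$ guarantees acceptance even in the early, data-poor regime. The \textbf{main obstacle} is the true-reject balls, where I must show that the optimistic weight $W_t$ is eventually dominated by the accumulated genuine negatives so that the local optimistic frequency drops (and then stays) below $1/2$ after only a bounded-in-$t$ number of acceptances; quantifying this switching time against the adaptively chosen $W_t$, and showing it is $O(\tfrac{1}{\tau}\ln(\cdot))$ per ball, is the technical heart of the argument. Finally I would combine this per-ball false-accept count with a geometric covering of $\mathrm{supp}(\mathcal{P}_{\mathcal{X}})$ at scale $R$: i.i.d.\ arrivals fill each relevant ball within $O(\tfrac{1}{p_{\mathcal{X}}}\ln(\cdot))$ rounds, the number of effective balls is controlled by $p_{\mathcal{X}}$, and multiplying the $O(\tfrac{1}{\tau}\ln(\cdot))$ per-ball regret by the $O(\tfrac{1}{p_{\mathcal{X}}})$ balls yields the claimed $\widetilde{\mathcal{O}}\!\big(\tfrac{1}{\tau p_{\mathcal{X}}}\ln(1/\delta')\big)$ bound, holding simultaneously for all $t$ on the high-probability event.
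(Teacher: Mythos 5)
Your proposal follows essentially the same route as the paper's proof: ball purity from $R=\tau^2/(128L)$ together with the $\tau$-gap, reduction of the optimistic accept decision to the pseudo-label-augmented local frequency $\frac{D_t+W_t}{A_t+W_t}$ compared against (roughly) $1/2$ via the constant-classifier argument enabled by Assumption~\ref{assumption::lipschitz_F} (the paper's Lemmas~\ref{lemma::logistic_objective_properties} and~\ref{lemma::logistic_objective_properties_reverse}), anytime martingale concentration with a $\delta/t^2$ union bound producing exactly the $4\sqrt{t\ln(6t^2\ln t/\delta')}$ floor in $W_t$, perpetual acceptance of true positives from the tuned branch of $W_t$ (hence zero false-rejection regret), and a bounded switching time on true-reject balls combined with a cover of the support at scale comparable to $R$. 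The only divergence is bookkeeping at the very end — you multiply a per-ball false-accept budget by the number of cover elements, whereas the paper derives a single global threshold $t \gtrsim \frac{1}{\alpha^2 p_{\min}^2}\ln(\cdot)$ after which every true negative is rejected — and both accountings land on the same $\widetilde{\mathcal{O}}\left(\frac{1}{\tau p_{\mathcal{X}}}\ln\left(\frac{1}{\delta'}\right)\right)$ bound.
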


The proof can be found in Appendix~\ref{section::plot_theory}.  Although the guarantees of Theorem~\ref{theorem::theorem.PLOT} require knowledge of $\tau$, in practice this requirement can easily be alleviated by using any of a variety of Model Selection approaches such as in~\cite{pacchiano2020model,cutkosky2021dynamic,abbasi2020regret,lee2021online,pacchiano2020regret}, at the price of a slightly worse regret rate. In the following section we conduct extensive empirical studies of \PLOT{} and demonstrate competitive finite time regret on a variety of public classification datasets.

\section{Experimental Results}\label{sec:experiments}

\paragraph{Experiment Design and Methods}
We evaluate the performance of \texttt{PLOT}\footnote{\href{shorturl.at/pzDY7}{Google Colab: shorturl.at/pzDY7}} on three binary classification problems adapted to the BLP setting. In time-step $t$, the algorithm observes context $\mathbf{x}_t \in \mathbb{R}^d$, and classifies the point, a.k.a accepts/rejects the point. If the point is accepted, a reward of one is given if that point is from the target class, and minus one otherwise. If the point is rejected, the reward is zero. We focus on two datasets from the UCI Collection \cite{Dua:2019}, the Adult dataset and the Bank dataset. Additionally we make use of MNIST \cite{ylecun} (d=784). The Adult dataset is defined as a binary classification problem,  where the positive class has income > \$50k. The Bank dataset is also binary, with the positive class as a successful marketing conversion. On MNIST, we convert the multi-class problem to binary classification by taking the positive class to be images of the digit 5, and treating all other images as negatives. Our main metric of interest is regret, measured against a baseline model trained on the entire dataset. The baseline model is used instead of the true label, as many methods cannot achieve oracle accuracy on real-world problems even with access to the entire dataset.

We focus on comparing our method to other neural algorithms, as prior papers \cite{riquelme2018deep}, \cite{pmlr-v97-kveton19a}, \cite{zhou2020neural} generally find neural models to have the best performance on these datasets. In particular, we focus on NeuralUCB\cite{zhou2020neural} as a strong benchmark method. We perform a grid search over a few values of the hyperparameter of NeuralUCB, considering \{0.1, 1, 4, 10\} and report results from the best value.  We also consider greedy and $\epsilon$-greedy methods. For $\epsilon$-greedy, we follow \cite{pmlr-v97-kveton19a}, and give the method an unfair advantage, i.e. we use a decayed schedule, dropping to 0.1\% exploration by T=2000. Otherwise, the performance is too poor to plot. 

In our experiments, we set the PLOT weight parameter to 1, equivalent to simply adding the pseudo-label point to the dataset. We set the PLOT radius parameter to $\infty$, thus including all prior observed points in the training dataset. Although our regret guarantees require problem-dependent settings of these two parameters, PLOT achieves strong performance with these simple and intuitive settings, without sweeping.

For computational efficiency, we run our method on batches of data, with batch size $n$ = 32. We average results over 5 runs, running for a horizon of $t$ = 2000 time-steps. Our dataset consists of the points accepted by the algorithm, for which we have the true labels. We report results for a two-layer, 40-node, fully-connected neural network.   At each timestep, we train this neural network on the above data, for a fixed number of steps. Then, a second neural network is cloned from those weights. A new dataset with pseudo-label data and historical data is constructed, and the second neural network is trained on that dataset for the same number of steps. This allows us to keep a continuously trained model which only sees true labels. The pseudo-label model only ever sees one batch of pseudo-labels. Each experiment runs on a single Nvidia Pascal GPU, and replicated experiments, distinct datasets, and methods can be run in parallel, depending on GPU availability.

\paragraph{Analysis of Results}
In the top row of Figure \ref{fig:breakdwon_final}, we provide cumulative regret plots for the above datasets and methods. Our method's cumulative regret is consistently competitive with other methods, and outperforms on MNIST. In addition, the variance of our method is much lower than that of NeuralUCB and Greedy, showing very consistent performance across the five experiments.

The bottom row of Figure \ref{fig:breakdwon_final} provides a breakdown of the decisions made by our model. As described in Section~\ref{section:neural.bandits}, on average the pseudo-label model only acts on $\epsilon$-percent of points classified as negative by the base model. We provide the cumulative probability of acceptance of true positive and true negative points acted on by the pseudo-label model. As the base model improves, the pseudo-label model receives fewer false positives, and becomes more confident in supporting rejections from the base model. To differentiate this decaying process from the pseudo-label learning, we highlight the significant gap between the probability of accepting positives and the probability of accepting negatives in our method. This shows that the \texttt{PLOT} method is not simply performing a decayed-$\epsilon$ strategy, but rather learning for which datapoints to inject optimism into the base classifier. 

\begin{figure*}
    \centering
    \vspace{-0.05in}
    \includegraphics[width=0.32\textwidth]{./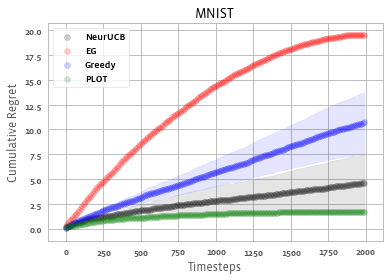}
    \includegraphics[width=0.32\textwidth]{./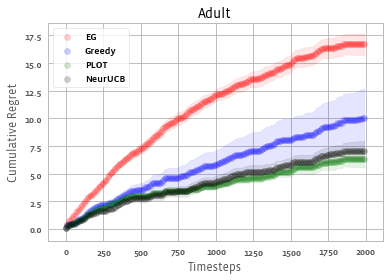}
    \includegraphics[width=0.32\textwidth]{./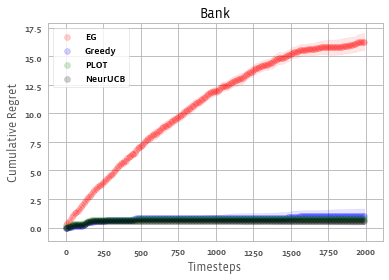}
    \includegraphics[width=0.32\textwidth]{./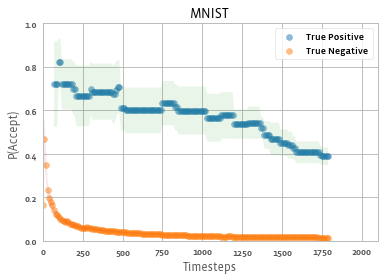}
    \includegraphics[width=0.32\textwidth]{./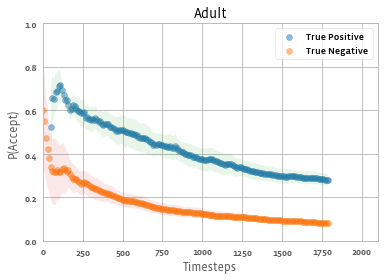}
    \includegraphics[width=0.32\textwidth]{./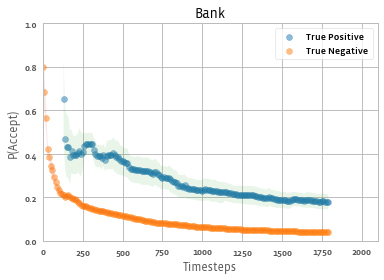}
    \vspace{-0.1in}
    \caption{Comparison of \texttt{PLOT} to $\epsilon$-greedy (EG), greedy, and NeuralUCB\citep{zhou2020neural}. Reward and pseudo-label accuracy are reported as a function of the timestep. One standard deviation from the mean, computed across the five experiments, is shaded.}
    \label{fig:breakdwon_final}
    \vspace{-0.1in}
\end{figure*}

\paragraph{PLOT in action.} We illustrate the workings of the \texttt{PLOT} algorithm by testing it on a simple XOR dataset. In Figure~\ref{fig:MultiSVM_evolution} we illustrate the evolution of the model's decision boundary in the presence of pseudo-label optimism. On the top left panel of Figure~\ref{fig:MultiSVM_evolution} we plot $300$ samples from the XOR dataset. There are four clusters in the XOR dataset. Each of these is produced by sampling a multivariate normal with isotropic covariance with a diagonal value of $0.5$. The cluster centers are set at $(0,5), (0,0), (5, -2)$, and $(5,5)$. All points sampled from the red clusters are classified as $0$ and all points sampled from the black clusters are classified as $1$. Although there are no overlaps in this picture, there is a non-zero probability that a black point may be sampled from deep inside a black region and vice versa.

\begin{figure*}
    \centering
    \vspace{-0.05in}
     \includegraphics[width=0.32\textwidth]{./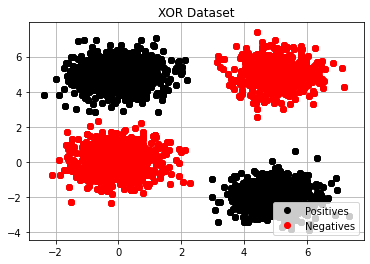}
    \includegraphics[width=0.32\textwidth]{./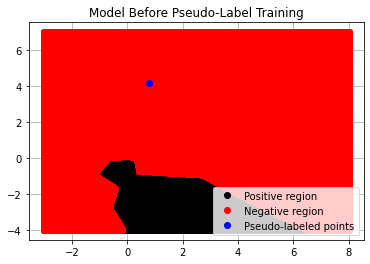}
    \includegraphics[width=0.32\textwidth]{./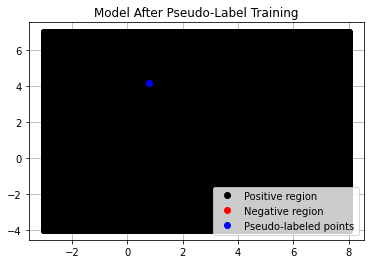}
    \includegraphics[width=0.32\textwidth]{./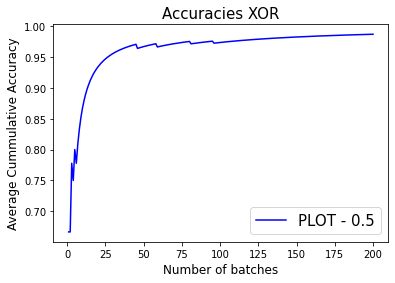}
    \includegraphics[width=0.32\textwidth]{./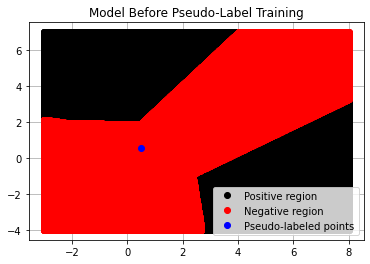}
    \includegraphics[width=0.32\textwidth]{./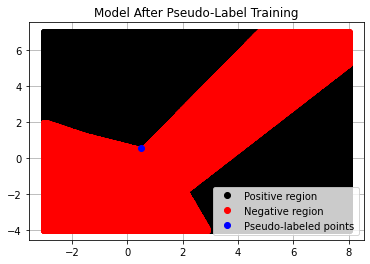}
    \includegraphics[width=0.32\textwidth]{./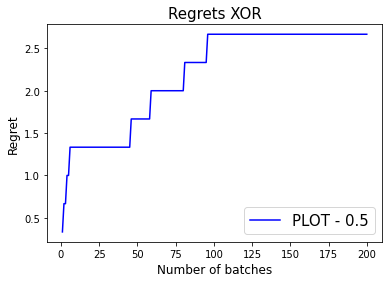}
    \includegraphics[width=0.32\textwidth]{./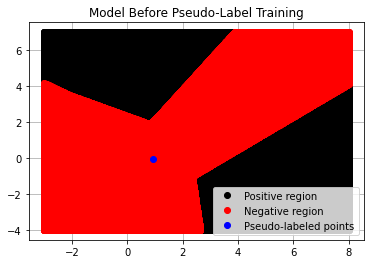}
    \includegraphics[width=0.32\textwidth]{./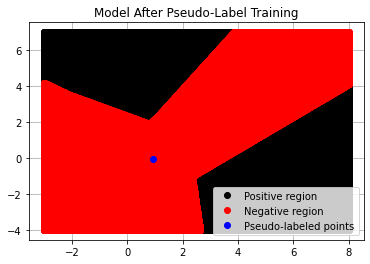}
    \vspace{-0.1in}
    \caption{\texttt{PLOT} with parameter $\epsilon = .5$ and batch size equals $3$. \textbf{Top} model boundary for batch $1$. The model's test accuracy after training equals $50\%$ Having seen very little data the model is very sensitive to pseudo-label optimism. \textbf{Middle} model boundary for batch $81$. The model's test accuracy equals $97.8\%$. The model boundary is pulled towards the pseudo labeled point. \textbf{Bottom} model boundary for batch $105$. The model's test accuracy equals $99.9\%$. The model training has stabilized. The extra optimism does not change the model boundary. }
    \label{fig:MultiSVM_evolution}
    \vspace{-0.1in}
\end{figure*}

\section{Conclusion}
We propose \PLOT{}, a novel algorithm that provides end-to-end optimism for the bank loan problem with neural networks. Rather than post-hoc optimism separate from the neural net training, optimism is directly incorporated into the neural net loss function through the addition of optimistic pseudo-labels. We provide regret guarantees for \texttt{PLOT}, and demonstrate its performance on real-world problems, where its performance illustrates the value of end-to-end optimism.

Our current analysis and deployment of \texttt{PLOT} is focused on the bank loan problem, with binary actions, where pseudo-label semantics are most clear. Due to the deep connections with active learning and binary contextual bandits, extending this work to larger action spaces is an interesting future direction.

Although the BLP is naturally modeled with delayed feedback, PLOT assumes that rewards are received immediately, as a wide body of work exists for adapting on-line learning algorithms to delayed rewards \citep{2005delayed}. Contexts (unlabeled batches of query points) do not have to be IID -- their distributions, $\mathcal{P}(\mathbf{x})$, may change adversarially. Handling this type of shift is a key component of PLOT's approach.
Optimism is essential to avoiding feedback loops in online learning algorithms, with significant implications for the fairness literature. We presented regret analyses here, which we hope can lay the foundation for future work on the analysis of optimism in the fairness literature.

\section{Statement of Broader Impact}
Explicitly incorporating optimism into neural representation learning is key to ensuring optimal exploration in the \emph{bank loan} problem. Other methods for exploration run the risk of feature blindness, where a neural network loses its uncertainty over certain features. When a representation learning method falls victim to this, additional optimism is insufficient to ensure exploration. This has ramifications for fairness and safe decision making. We believe that explicit optimism is a key step forward for safe and fair decision making.

However, we do want to provide caution around our method's limitations. Our regret guarantees and empirical results assume I.I.D data, and may not prevent representation collapse in non-stationary and adversarial settings. Additionally, although our empirical results show strong performance in the non-separable setting, our regret guarantees only hold uniformly in the separable setting.

\newpage
\bibliographystyle{abbrv}
\bibliography{refs}

\section*{Checklist}
\begin{enumerate}

\item For all authors...
\begin{enumerate}
  \item Do the main claims made in the abstract and introduction accurately reflect the paper's contributions and scope?
    \answerYes{}
  \item Did you describe the limitations of your work?
    \answerYes{\textbf{Yes, in our societal impact section we discuss limitations of our work.}}
  \item Did you discuss any potential negative societal impacts of your work?
    \answerYes{\textbf{We have added a section on the broader impact of our method.}}
  \item Have you read the ethics review guidelines and ensured that your paper conforms to them?
    \answerYes{}
\end{enumerate}

\item If you are including theoretical results...
\begin{enumerate}
  \item Did you state the full set of assumptions of all theoretical results?
    \answerYes{}
	\item Did you include complete proofs of all theoretical results?
    \answerYes{\textbf{The appendix contains detailed proofs of our theoretical claims}}
\end{enumerate}

\item If you ran experiments...
\begin{enumerate}
  \item Did you include the code, data, and instructions needed to reproduce the main experimental results (either in the supplemental material or as a URL)?
    \answerYes{}
  \item Did you specify all the training details (e.g., data splits, hyperparameters, how they were chosen)?
    \answerYes{}
	\item Did you report error bars (e.g., with respect to the random seed after running experiments multiple times)?
    \answerYes{\textbf{Our plots include error bars for 5 runs.}}
	\item Did you include the total amount of compute and the type of resources used (e.g., type of GPUs, internal cluster, or cloud provider)?
    \answerYes{\textbf{We described the types of GPUs used, as well as the parallelism used in our method.}}
\end{enumerate}

\item If you are using existing assets (e.g., code, data, models) or curating/releasing new assets...
\begin{enumerate}
  \item If your work uses existing assets, did you cite the creators?
    \answerYes{We cited the MNIST and Adult dataset creators.}
  \item Did you mention the license of the assets?
    \answerNA{}
  \item Did you include any new assets either in the supplemental material or as a URL?
    \answerYes{\textbf{We have included our code in the supplemental material.}}
  \item Did you discuss whether and how consent was obtained from people whose data you're using/curating?
    \answerNA{\textbf{The datasets we used are public and previously published}}
  \item Did you discuss whether the data you are using/curating contains personally identifiable information or offensive content?
    \answerNA{\textbf{The datasets we used are public and previously published, and do not have PII/offensive content}}
\end{enumerate}

\item If you used crowdsourcing or conducted research with human subjects...
\begin{enumerate}
  \item Did you include the full text of instructions given to participants and screenshots, if applicable?
    \answerNA{\textbf{No crowdsourcing or human subjects.}}
  \item Did you describe any potential participant risks, with links to Institutional Review Board (IRB) approvals, if applicable?
    \answerNA{}
  \item Did you include the estimated hourly wage paid to participants and the total amount spent on participant compensation?
    \answerNA{}
\end{enumerate}

\end{enumerate}

\appendix

\newpage
\section{Why Optimism?}\label{section::why_optimism}

In this section we describe the common proof template behind the principle of optimism in Stochastic Bandit problems. We illustrate this in the setting of binary classification that we work with. 

As we mentioned in Section~\ref{section:setting}, we work with decision rules based on producing at time $t$ a model $\boldsymbol{\theta}_t$ that is used to make a prediction of the form $\mu( f_{\boldsymbol{\theta}_t}(\mathbf{x}_t))$ of the probability that point $\mathbf{x}_t$ should be accepted. If $f_{\boldsymbol{\theta}_t}(\mathbf{x}_t) \geq 0$, point $\mathbf{x}_t$ will be accepted and its label $y_t$ observed, whereas if $f_{\boldsymbol{\theta}_t}(\mathbf{x}_t) < 0$, the point will be discarded and the label will remain unseen. Here we define an optimistic algorithm in this setting:

\begin{definition}[Optimistic algorithm]
We say an algorithm is optimistic for this setting if the models selected at all times $t$ satisfy $f_{\boldsymbol{\theta}_t}(\mathbf{x}_t) \geq f_{\boldsymbol{\theta}_\star}(\mathbf{x}_t)$ for all $t$. 
\end{definition}

We now show the regret of any optimistic algorithm can be upper bounded by the model's estimation error,

\begin{align*}
    \mathcal{R}(t) &= \sum_{\ell=1}^t \max(0, 2\mu( f_{\boldsymbol{\theta}_\star}(\mathbf{x}_t))-1) - a_t(2\mu(f_{\boldsymbol{\theta}_\star}(\mathbf{x}_t)) - 1) \\
    &\stackrel{(i)}{\leq} \sum_{\ell=1}^t 2a_t\left( \mu( f_{\boldsymbol{\theta}_t}(\mathbf{x}_t) )- \mu(f_{\boldsymbol{\theta_\star}}(\mathbf{x}_t))  \right)
\end{align*}

Let's see why inequality $(i)$ holds. Notice that for any optimistic model, the false negative rate must be zero.  Rejection of a point $\mathbf{x}_t$ may occur only for points that are truly negative. This implies the instantaneous regret satisfies $$\max(0, 2\mu( f_{\boldsymbol{\theta}_\star}(\mathbf{x}_t))-1) - a_t(2\mu(f_{\boldsymbol{\theta}_\star}(\mathbf{x}_t)) - 1) = a_t\left( \max(0, 2\mu( f_{\boldsymbol{\theta}_\star}(\mathbf{x}_t) )-1)- 2\mu(f_{\boldsymbol{\theta_\star}}(\mathbf{x}_t)) +1 \right).$$ 
By definition $a_t = 1$ only when $f_{\boldsymbol{\theta}_t}(\mathbf{x}_t)\geq 0$. This observation plus the optimistic nature of the models $\{ \boldsymbol{\theta}_t\}_{t}$ implies that $\max(0, 2\mu( f_{\boldsymbol{\theta}_\star}(\mathbf{x}_t) )-1) \leq 2\mu( f_{\boldsymbol{\theta}_t}(\mathbf{x}_t) )-1 $ and thus inequality $(i)$.

As a consequence of this discussion we can conclude that in order to control the regret of an optimistic algorithm, it is enough to control its estimation error.
In other words, finding a model that overestimates the response is not sufficient, the models' error must converge as well.

\section{\PLOT{} Theory - Proof of Theorem~\ref{theorem::theorem.PLOT}}\label{section::plot_theory}

In this section we prove the results stated in Theorem~\ref{theorem::theorem.PLOT}. The following property of the logistic function will prove useful. 

\begin{remark}
The logistic function $\mu$ is $1/4$ Lipschitz.

\end{remark}

Throughout the discussion we will make use of the notation $B(\mathbf{x}, R)$ to denote the $\| \cdot \|_2$ ball of radius $R$ centered around point $\mathbf{x}$.

In this section we will make the following assumptions.

\begin{assumption}[Bounded Support $\mathcal{P}_{\mathcal{X}}$]
$\mathcal{P}_{\mathcal{X}}$ has bounded support. All $\mathbf{x} \in \mathbf{supp}(\mathcal{P}_{\mathcal{X}})$ satisfy $\| \mathbf{x}\| \leq B$.
\end{assumption}

\begin{assumption}[Lipschitz $\mathcal{F}_{\Theta}$]\label{assumption::lipschitz_F} The function class $\mathcal{F}_{\Theta}$ is $L-$Lipschitz and contains all constant functions ($f_{\boldsymbol{\theta}}$ such that $f_{\boldsymbol{\theta}}(\mathbf{x}) = c$ for $c \in [-2,2]$).
\end{assumption}

\begin{assumption}[$\tau-$Gap]\label{assumption::gap} For all $x \in \mathbf{supp}(\mathcal{P}_\mathcal{X}) $, the values $f_{\boldsymbol{\theta}_\star}(\boldsymbol{x})$ are bounded away from zero.
\begin{equation*}
    \left| f_{\boldsymbol{\theta}_\star}(\boldsymbol{x}) \right| \geq \tau > 0. 
\end{equation*}
where $\tau \in (0,1)$. %
\end{assumption}

The following supporting result regarding the logistic function will prove useful.

\begin{lemma}\label{lemma:property_logistic1}
For $x \in (0,1)$, the logistic function satisfies, $ \frac{1}{2} + c x \leq \mu(x) \leq \frac{1}{2} + x$ where $c = \frac{e}{(1+e)^2}$ and $e = \exp(1)$.  
\end{lemma}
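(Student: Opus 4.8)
The plan is to exploit the fact that the slopes appearing in the two claimed linear bounds are exactly the tangent slopes of $\mu$ at the endpoints of the interval. First I would record the elementary identities $\mu(0) = \tfrac{1}{2}$ and $\mu'(x) = \mu(x)(1-\mu(x))$, and then observe that the lower-bound slope is precisely the right-endpoint derivative: $c = \mu'(1) = \mu(1)(1-\mu(1)) = \frac{e}{1+e}\cdot\frac{1}{1+e} = \frac{e}{(1+e)^2}$. The preceding $1/4$-Lipschitz Remark already identifies $\mu'(0) = \tfrac{1}{4}$ as the global maximum of $\mu'$, which will supply the upper-bound slope.

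Next I would establish that $\mu'$ is non-increasing on $[0,\infty)$. This follows from $\mu''(x) = \mu'(x)(1 - 2\mu(x))$, which is $\le 0$ for $x \ge 0$ since $\mu(x) \ge \tfrac{1}{2}$ there; equivalently, $\mu$ is concave on $[0,\infty)$. Consequently, for every $t \in (0,1)$ we obtain the sandwich $c = \mu'(1) \le \mu'(t) \le \mu'(0) = \tfrac{1}{4}$, because $\mu'$ decreasing means its value at any interior point is bracketed by its values at the two endpoints $0$ and $1$.

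Both inequalities then follow by integrating this sandwich from $0$ to $x$ and invoking $\mu(0) = \tfrac{1}{2}$. For the lower bound, $\mu(x) - \tfrac{1}{2} = \int_0^x \mu'(t)\,dt \ge \int_0^x c\,dt = cx$, giving $\mu(x) \ge \tfrac{1}{2} + cx$. For the upper bound, the same integration yields $\mu(x) - \tfrac{1}{2} \le \int_0^x \tfrac{1}{4}\,dt = \tfrac{x}{4} \le x$ for $x \in (0,1)$, so $\mu(x) \le \tfrac{1}{2} + x$ as claimed (in fact the sharper $\tfrac{1}{2} + \tfrac{x}{4}$ holds, but the stated bound is weaker and therefore immediate).

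There is no serious obstacle in this argument; it is essentially a monotonicity-plus-integration exercise. The only points requiring care are correctly identifying the constant $c$ as the right-endpoint derivative $\mu'(1)$, and confirming the direction of monotonicity of $\mu'$, so that the endpoint values of $\mu'$ genuinely bracket it throughout $(0,1)$. Everything else is routine.
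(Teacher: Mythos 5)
Your proof is correct and takes essentially the same route as the paper's: both arguments rest on $\mu(0)=\tfrac{1}{2}$ together with the fact that $\mu'(x)=\mu(x)(1-\mu(x))$ decreases on $(0,1)$ from $\mu'(0)=\tfrac{1}{4}$ down to $\mu'(1)=\tfrac{e}{(1+e)^2}=c$, the paper merely packaging the integration of this derivative sandwich as monotonicity of the auxiliary functions $g(x)=\mu(x)-\bigl(\tfrac{1}{2}+cx\bigr)$ and $h(x)=\tfrac{1}{2}+x-\mu(x)$. Your one small improvement is that you actually verify the monotonicity of $\mu'$ via $\mu''(x)=\mu'(x)(1-2\mu(x))\le 0$ for $x\ge 0$, a fact the paper asserts without proof.
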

\begin{proof}
The derivative of $\mu$ satisfies $\mu'(x) = (1-\mu(x))\mu(x)$ and is a decreasing function in the interval $(0,1)$ with a minimum value of $\frac{e}{(1+e)^2}$.

Consider the function $g(x) = \mu(x) - \left(\frac{1}{2}+ c x\right)$. It is easy to see that $g(0) = 0 $ and that $g'(x) = \mu'(x) - c \geq 0$ for all $x \in (0,1)$, therefore, $g(x)$ is increasing in the interval $(0,1)$ and we conclude that $g(x) \geq 0$ for all $x \in (0,1)$. The result follows:
\begin{equation*}
    \mu(x) \geq \frac{1}{2} + cx \quad \forall x \in (0,1).
\end{equation*}

To prove the second direction we consider the function $h(x) = \frac{1}{2}+  x     - \mu(x) $. Observe that $h'(x) = 1- (1-\mu(x) ) \mu(x) $ and therefore $h'(x) \geq 0$ since $h(0) = 0$ this implies ther result. 

\end{proof}

We will make use of Pinsker's inequality,
\begin{lemma}[Pinsker's inequality]\label{lemma::pinsker_inequality}
Let $\mathbb{P}$ and $\mathbb{Q}$ be two distributions defined on the unvierse $U$. Then,
\begin{equation*}
    D_{\mathrm{KL}}( \mathbb{P}\parallel \mathbb{Q} ) = \frac{1}{2\ln(2)} \| \mathbb{P} - \mathbb{Q} \|_1^2 
\end{equation*}
\end{lemma}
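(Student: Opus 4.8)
The displayed relation is the classical Pinsker inequality, and the equality sign should be read as ``$\geq$''; concretely, the plan is to establish
\[
D_{\mathrm{KL}}(\mathbb{P} \parallel \mathbb{Q}) \geq \frac{1}{2\ln 2}\,\|\mathbb{P} - \mathbb{Q}\|_1^2 .
\]
The strategy I would follow is the standard two-step reduction: first collapse the statement over the whole universe $U$ to a single scalar inequality on Bernoulli distributions, and then verify that scalar inequality by elementary calculus.

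For the reduction, I would let $A = \{ u \in U : \mathbb{P}(u) \geq \mathbb{Q}(u)\}$ and write $p = \mathbb{P}(A)$, $q = \mathbb{Q}(A)$. Because $\mathbb{P}$ and $\mathbb{Q}$ are both probability measures on $U$, the positive and negative parts of $\mathbb{P}-\mathbb{Q}$ carry equal total mass, so $\|\mathbb{P}-\mathbb{Q}\|_1 = 2\big(\mathbb{P}(A)-\mathbb{Q}(A)\big) = 2|p-q|$. Next I would invoke the data-processing inequality for KL divergence applied to the coarsening map $u \mapsto \mathbf{1}\{u \in A\}$: this map pushes $\mathbb{P},\mathbb{Q}$ forward to $\mathrm{Ber}(p),\mathrm{Ber}(q)$, and data processing (equivalently, the log-sum inequality grouped over the two blocks $A$ and $A^c$) gives $D_{\mathrm{KL}}(\mathbb{P}\parallel\mathbb{Q}) \geq d(p\parallel q)$, where $d(p\parallel q) = p\ln\frac{p}{q} + (1-p)\ln\frac{1-p}{1-q}$ is the binary KL in nats. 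It then suffices to prove the scalar bound $d(p\parallel q) \geq 2(p-q)^2$: combined with the mass identity $\|\mathbb{P}-\mathbb{Q}\|_1 = 2|p-q|$ this yields $D_{\mathrm{KL}}(\mathbb{P}\parallel\mathbb{Q}) \geq 2(p-q)^2 = \tfrac{1}{2}\|\mathbb{P}-\mathbb{Q}\|_1^2$ in nats, and dividing through by $\ln 2$ to convert to bits produces exactly the factor $\frac{1}{2\ln 2}$ in the claim.

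For the scalar inequality I would fix $q$ and set $\phi(p) = d(p\parallel q) - 2(p-q)^2$. A direct computation gives $\phi(q)=0$, while $\phi'(p) = \ln\frac{p(1-q)}{q(1-p)} - 4(p-q)$ vanishes at $p=q$, and $\phi''(p) = \frac{1}{p(1-p)} - 4 \geq 0$ on $[0,1]$ since $p(1-p)\leq \tfrac14$. Hence $\phi$ is convex with a stationary point at $p=q$, so $p=q$ is a global minimum and $\phi(p) \geq \phi(q) = 0$ for all $p$, which is the desired scalar bound.

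The only genuinely delicate part is the reduction step, so that is where I would spend care: namely, justifying the total-variation identity $\|\mathbb{P}-\mathbb{Q}\|_1 = 2(\mathbb{P}(A)-\mathbb{Q}(A))$ via the optimal discriminating event $A$, and justifying that the coarsening $u \mapsto \mathbf{1}\{u\in A\}$ cannot increase KL divergence, which rests on the log-sum inequality (convexity of $t\mapsto t\ln t$). Once these two facts are established, the concluding scalar computation is entirely routine, and the factor-of-$\ln 2$ bookkeeping between nats and bits is the only thing left to track to match the stated constant.
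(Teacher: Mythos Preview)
The paper does not actually prove this lemma; it is simply stated (with the equality sign a typo for $\geq$) as a standard fact and then invoked inside the proof of Lemma~\ref{lemma::logistic_objective_properties}. Your proposal supplies the classical two-step proof---reduction to the Bernoulli case via the optimal discriminating event and the log-sum/data-processing inequality, followed by the convexity computation $\phi''(p)=\frac{1}{p(1-p)}-4\geq 0$---and this is correct. One small remark: in the paper the lemma is only ever applied to a pair of Bernoulli distributions, so the reduction step, while fine to include, is not strictly needed for the paper's purposes; your scalar computation alone already covers the single use case.
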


Recall the unregularized and normalized negative cross entropy loss over a dataset $\mathcal{D}_t$ equals,

\begin{align}
    \bar{\mathcal{L}}(\boldsymbol{\theta} | \mathcal{D}_t) &= \frac{1}{\left|\mathcal{D}_t\right|} \sum_{(\mathbf{x}, y )\in \mathcal{D}_t}    -y \log\left(\mu( f_{\boldsymbol{\theta}}(\boldsymbol{x}) )\right) -  (1-y)\log\left(1-\mu( f_{\boldsymbol{\theta}}(\mathbf{x})\right)  \label{equation::empirical_normalized_cross_entropy}  
\end{align}

We can extend this definition to the population level. For any distribution $\mathcal{Q}$ we define the unregularized \emph{normalized}  cross entropy loss over $\mathcal{Q}$ whose labels are generated according to a logistic model with parameter $\boldsymbol{\theta}_\star$ as 

\begin{align*}
    \bar{\mathcal{L}}(\boldsymbol{\theta} | \mathcal{Q}) &= \mathbb{E}_{(\mathbf{x}, y) \sim \mathcal{Q}}\left[   -y \log\left(\mu( f_{\boldsymbol{\theta}}(\boldsymbol{x}) )\right) -  (1-y)\log\left(1-\mu( f_{\boldsymbol{\theta}}(\mathbf{x})\right)   \right] \\
    &= \mathbb{E}_{\mathbf{x} \sim \mathcal{Q}_\mathbf{x}}\left[ \mathrm{KL}\left( \mu(f_{\boldsymbol{\theta}_\star}(\mathbf{x} )  \parallel  \mu(f_{\boldsymbol{\theta}}(\mathbf{x}) )  )  \right] - \mathbb{E}_{\mathbf{x} \sim \mathcal{Q}_\mathbf{x}}\left[ \mathrm{H}( \mu(f_{\boldsymbol{\theta}_\star}(\mathbf{x} )    \right)  \right]
\end{align*}

As an immediate consequence of the last equality, we see that when $f_{\boldsymbol{\theta}_\star} \in \mathcal{F}$, the vector $\boldsymbol{\theta}_\star$ is a minimizer of the population cross entropy loss. From now on we'll use the notation $\widehat{\mathcal{P}}_t$ to denote the empirical distribution over datapoints given by $\mathcal{D}_t$.

Observe also that if $\boldsymbol{x} \in \mathbf{supp}(\mathcal{P}_{\mathcal{X}})$, for all $\boldsymbol{x}' \in \mathbb{R}^d$ such that $\| \boldsymbol{x} - \boldsymbol{x}' \| \leq \frac{\tau}{2L}$, we have that as a consequence of Assumption~\ref{assumption::lipschitz_F},
\begin{equation*}
    \left| f_{\boldsymbol{\theta}}(\boldsymbol{x}) - f_{\boldsymbol{\theta}}(\boldsymbol{x}') \right| \leq \frac{\tau}{2}, \quad \forall \boldsymbol{\theta} \in \Theta.
\end{equation*}
and therefore because of Assumption~\ref{assumption::gap},
\begin{equation*}
    \left| f_{\boldsymbol{\theta}_\star}(\boldsymbol{x}') \right| \geq \frac{\tau}{2}.
\end{equation*}

Now let's consider $\boldsymbol{x} \in \mathbf{supp}(\mathcal{P}_\mathcal{X})$ such that $f_{\boldsymbol{\theta}}(\boldsymbol{x}) > 0$. By Assumption~\ref{assumption::gap}, this implies that $f_{\boldsymbol{\theta}}(\boldsymbol{x}) \geq \tau$. Similarly if $\boldsymbol{x} \in \mathbf{supp}(\mathcal{P}_\mathcal{X})$ such that $f_{\boldsymbol{\theta}}(\boldsymbol{x}) < 0$ implies that $f_{\boldsymbol{\theta}}(\boldsymbol{x}) \leq -\tau$.

Let's start by considering the case when $\forall (\mathbf{x}, y), (\mathbf{x}', y') \in \mathcal{D}_t$ satisfy $\| \mathbf{x} - \mathbf{x}'\|\leq \tau^2$. Let's for a moment assume that $f_{\boldsymbol{\theta}_\star}( \mathbf{x}   ) > 0 $ for all $(\mathbf{x}, y) \in \mathcal{D}_t$ and therefore (by Assumption~\ref{assumption::gap}) that $f_{\boldsymbol{\theta}_\star}( \mathbf{x}   ) \geq \tau$. If this is the case, we will assume that

\begin{lemma}\label{lemma::logistic_objective_properties}
If $\mathcal{D}_t$ satisfies the following properties,
\begin{enumerate}
    \item $\forall (\mathbf{x}, y), (\mathbf{x}', y') \in \mathcal{D}_t$ it holds that $\| \mathbf{x} - \mathbf{x}'\|\leq \frac{\tau^2}{128L}$.
    \item There exists $(\tilde{\mathbf{x}}, y) \in \mathcal{D}_t$ such that $f_{\boldsymbol{\theta}_\star}(\mathbf{x}) > 0$.
    \item $\hat{y} = \frac{1}{|\mathcal{D}_t|} \sum_{(\mathbf{x}, y) \in \mathcal{D}_t} y \geq \frac{1}{4} + \frac{\mu(\tau)}{2}$. %
\end{enumerate}
Then,
\begin{equation*}
    \widehat{\boldsymbol{\theta}}_t = \argmin_{\mathbf{\theta}} \bar{\mathcal{L}}(\boldsymbol{\theta}| \mathcal{D}_t) 
\end{equation*}
Satisfies, $f_{\widehat{\boldsymbol{\theta}}_t}(\mathbf{x}) > 0$ for all $(\mathbf{x},y) \in \mathcal{D}_t$.
\end{lemma}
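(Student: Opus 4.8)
The plan is to sandwich the empirical loss of the maximum-likelihood minimizer $\widehat{\boldsymbol\theta}_t$ between the loss of a well-chosen \emph{constant} predictor (from above) and the loss at the constant $0$ (from below, under the supposition that some prediction is nonpositive), and then to show that the resulting bounds are incompatible. First I would record two consequences of the hypotheses. Since every pair of points in $\mathcal{D}_t$ lies within $R=\tfrac{\tau^2}{128L}\le\tfrac{\tau}{2L}$ (using $\tau\in(0,1)$), the Lipschitz-closeness observation preceding the lemma, together with Property~2 and Assumption~\ref{assumption::gap}, gives $f_{\boldsymbol\theta_\star}(\mathbf{x})\ge\tau>0$ for every $(\mathbf{x},y)\in\mathcal{D}_t$; more usefully, $L$-Lipschitzness (Assumption~\ref{assumption::lipschitz_F}) forces the predictions $z_i:=f_{\boldsymbol\theta}(\mathbf{x}_i)$ of any fixed $\boldsymbol\theta$ to span an interval of width at most $L\cdot R=\tfrac{\tau^2}{128}$.

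Next I would reduce the per-point loss to a single constant-predictor loss. Writing $\bar{\mathcal{L}}(\boldsymbol\theta\mid\mathcal{D}_t)=\tfrac{1}{|\mathcal{D}_t|}\sum_i \ell(z_i,y_i)$ with $\ell(z,y)=-y\log\mu(z)-(1-y)\log(1-\mu(z))$, and setting $\phi(z):=-\hat y\log\mu(z)-(1-\hat y)\log(1-\mu(z))$ (the loss of the constant predictor $z$), I note that $\tfrac{d}{dz}\ell(z,y)=\mu(z)-y\in(-1,1)$, so $\ell$ is $1$-Lipschitz in $z$. Hence, with $m:=\min_i z_i$, the bounded spread gives $\bar{\mathcal{L}}(\boldsymbol\theta\mid\mathcal{D}_t)\ge\phi(m)-\tfrac{\tau^2}{128}$. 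Also $\phi'(z)=\mu(z)-\hat y$, so $\phi$ is convex and decreasing on $(-\infty,c^\star]$ with $c^\star=\mu^{-1}(\hat y)$; Property~3 with Lemma~\ref{lemma:property_logistic1} gives $\hat y\ge\tfrac12+\tfrac{c}{2}\tau$ where $c=\tfrac{e}{(1+e)^2}$, so $c^\star>0$.

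For the upper bound, since $\mathcal{F}_\Theta$ contains all constants in $[-2,2]$, $\bar{\mathcal{L}}(\widehat{\boldsymbol\theta}_t\mid\mathcal{D}_t)\le\min_{c'\in[-2,2]}\phi(c')\le\phi(\bar c)$ for the admissible choice $\bar c:=\mu^{-1}(\tfrac12+\tfrac c2\tau)\in(0,2)$; since $\hat y\ge\bar p:=\mu(\bar c)=\tfrac12+\tfrac c2\tau$ one checks $\log 2-\phi(\bar c)\ge D_{\mathrm{KL}}(\mathrm{Ber}(\bar p)\parallel\mathrm{Ber}(\tfrac12))$, which Pinsker's inequality (Lemma~\ref{lemma::pinsker_inequality}) bounds below by $2(\bar p-\tfrac12)^2=\tfrac{c^2\tau^2}{2}$, so $\bar{\mathcal{L}}(\widehat{\boldsymbol\theta}_t\mid\mathcal{D}_t)\le\log 2-\tfrac{c^2\tau^2}{2}$. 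Now suppose toward a contradiction that $f_{\widehat{\boldsymbol\theta}_t}(\mathbf{x}_0)\le 0$ for some $(\mathbf{x}_0,y_0)\in\mathcal{D}_t$; then $m\le 0<c^\star$, so $\phi(m)\ge\phi(0)=\log 2$ and the lower bound yields $\bar{\mathcal{L}}(\widehat{\boldsymbol\theta}_t\mid\mathcal{D}_t)\ge\log 2-\tfrac{\tau^2}{128}$. Combining the two forces $\tfrac{c^2}{2}\le\tfrac{1}{128}$, i.e. $c^2\le\tfrac{1}{64}$, contradicting $c=\tfrac{e}{(1+e)^2}\approx 0.197$; hence $f_{\widehat{\boldsymbol\theta}_t}(\mathbf{x})>0$ for all points of $\mathcal{D}_t$.

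I expect the main obstacle to be the constant bookkeeping: the radius $R=\tfrac{\tau^2}{128L}$ must be small enough that the Lipschitz spread $\tfrac{\tau^2}{128}$ of the predictions is strictly dominated by the quadratic KL gap $\tfrac{c^2}{2}\tau^2$, which is precisely why the explicit constant $c=\tfrac{e}{(1+e)^2}$ of Lemma~\ref{lemma:property_logistic1} and the factor $128$ appear and have to be reconciled. A secondary subtlety is that the comparison constant must be taken as $\bar c=\mu^{-1}(\tfrac12+\tfrac c2\tau)$ inside the admissible range $[-2,2]$ rather than the unconstrained optimum $c^\star=\mu^{-1}(\hat y)$, which can exceed $2$ when $\hat y$ is close to $1$; choosing $\bar p=\tfrac12+\tfrac c2\tau$ keeps $\bar c\in(0,2)$ while still delivering the $\tfrac{c^2\tau^2}{2}$ gap through Pinsker.
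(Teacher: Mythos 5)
Your proof is correct, and while it rests on the same two pillars as the paper's proof --- comparison against a constant classifier supplied by Assumption~\ref{assumption::lipschitz_F}, with the quantitative gap extracted via Lemma~\ref{lemma:property_logistic1} and Pinsker's inequality (Lemma~\ref{lemma::pinsker_inequality}) --- the execution is genuinely different and substantially cleaner. The paper works at the probability scale: it bounds $\mu(f_{\tilde{\boldsymbol\theta}}(\mathbf{x}))\le\tfrac12+\tfrac{\tau}{128}$ for any classifier rejecting some point, inserts a constant classifier at the midpoint level $\tfrac12+\tfrac{487\tau}{6400}$, and then must solve a constrained one-dimensional minimization over the adversary's level $z$ (computing the stationary point $z^*(\tau)$ and checking it lies beyond the boundary $\tfrac12+\tfrac{\tau}{128}$) before decomposing the loss gap into a Pinsker-controlled KL term ($\ge 0.013\tau^2$) and a remainder ($\ge -0.004\tau^2$). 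You collapse this machinery by working at the logit scale: the $1$-Lipschitzness of $\ell(z,y)$ in $z$ reduces the whole dataset loss to the scalar constant-predictor loss $\phi$ up to an additive $\tfrac{\tau^2}{128}$; the exact pivot $\phi(0)=\log 2$ together with monotonicity of $\phi$ below $\mu^{-1}(\hat y)$ replaces the boundary optimization entirely; and the identity $\log 2-\phi(\bar c)=D_{\mathrm{KL}}\bigl(\mathrm{Ber}(\bar p)\,\|\,\mathrm{Ber}(\tfrac12)\bigr)+(\hat y-\bar p)\log\tfrac{\bar p}{1-\bar p}$ makes the Pinsker step a one-liner, so the whole lemma reduces to the transparent numerical condition $\tfrac{c^2}{2}>\tfrac{1}{128}$, i.e. $c=\tfrac{e}{(1+e)^2}\approx 0.197>\tfrac18$. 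Your explicit check that the comparator must be taken at $\bar c=\mu^{-1}(\tfrac12+\tfrac c2\tau)\in(0,2)$ rather than at the unconstrained optimum $\mu^{-1}(\hat y)$ (which can leave the admissible range $[-2,2]$ when $\hat y$ is close to $1$) addresses a subtlety the paper handles only implicitly. One cosmetic remark: your opening assertion that $f_{\boldsymbol\theta_\star}(\mathbf{x})\ge\tau$ on all of $\mathcal{D}_t$ is never used afterwards --- like the paper's, your quantitative argument consumes only Properties 1 and 3 of the hypothesis --- so nothing hinges on it.
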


\begin{proof}
First observe that as a consequence of the $L$-Lipschitzness of $f_{\boldsymbol{\theta}_\star}$ having all points in $\mathcal{D}_t$ be contained within a ball of radius $\frac{\tau^2}{128L}$ implies that for all $(\mathbf{x}, y), (\mathbf{x}', y') \in \mathcal{D}_t$ the difference $\left|\mu(f_{\boldsymbol{\theta}_\star}(\mathbf{x}))- \mu(f_{\boldsymbol{\theta}_\star}(\mathbf{x}'))\right| \leq \frac{1}{4}\left|f_{\boldsymbol{\theta}_\star}(\mathbf{x}) - f_{\boldsymbol{\theta}_\star}(\mathbf{x}')  \right| \leq \frac{L}{4}\| \mathbf{x} - \mathbf{x}'\| \leq \frac{L\tau^2}{4\times 128L} = \frac{\tau^2}{128\times4}$. In particular this also implies that $ \left|f_{\boldsymbol{\theta}_\star}(\mathbf{x}) - f_{\boldsymbol{\theta}_\star}(\mathbf{x}')  \right|  \leq \frac{\tau^2}{128} \leq \frac{\tau}{128}  $. The last inequality holds because $\tau^2 \leq \tau$.

Let $\widetilde{\mathbf{x}}$ be a point in $\mathcal{D}_t$ such that $f_{\boldsymbol{\theta}_\star}(\widetilde{\mathbf{x}}) > 0$. By Assumption~\ref{assumption::gap}, $f_{\boldsymbol{\theta}_\star}(\widetilde{\mathbf{x}}) \geq \tau$ and therefore, $\mu( f_{\boldsymbol{\theta}_\star}(\widetilde{\mathbf{x}}))\geq \mu(\tau)$. This implies that for all $(\mathbf{x}, y) \in \mathcal{D}_t$, $\mu( f_{\boldsymbol{\theta}_\star}(\mathbf{x}) ) \geq  \mu(\tau) - \frac{\tau}{128\times 4}$ and that $f_{\boldsymbol{\theta}_\star}(\mathbf{x}) \geq \frac{127\tau}{128}$.

By Lemma~\ref{lemma:property_logistic1} $\mu(\tau) \geq \frac{1}{2} + c \tau$ where $c \approx .196$ and therefore $\mu(f_{\boldsymbol{\theta}_\star}(\mathbf{x}) ) \geq \frac{1}{2} + (c-\frac{1}{128\times 4})\tau \geq \frac{1}{2} + \frac{4\tau}{25}$ for all $(\mathbf{x}, y) \in \mathcal{D}_t$. In other words, all points in $\mathcal{D}_t$ should have true positive average labels with a probability gap value (away from $1/2$) of at least $\frac{4\tau}{25}$.

We will prove this Lemma by exhibiting an $L-$Lipschitz classifier whose loss always lower bounds the loss of any classifier that rejects any of the points. But first, let's consider a classifier parametrized by $\tilde{\boldsymbol{\theta}}$ such that $f_{\tilde{\boldsymbol{\theta}}}(\mathbf{x} ) \leq 0$ for some $(\mathbf{x}, y ) \in \mathcal{D}_t$. If this holds, the radius $\frac{\tau}{128L}$ of $\mathcal{D}_t$ and the $L-$Lipschitzness of the function class imply,

\begin{equation*}
    f_{\tilde{\boldsymbol{\theta}}}(\mathbf{x}') \leq \frac{\tau}{128}, \forall \mathbf{x}' \in \mathcal{D}_t.
\end{equation*}

And therefore that $\mu(     f_{\tilde{\boldsymbol{\theta}}}(\mathbf{x})) \leq \mu(\frac{\tau}{128} ) \leq \frac{1}{2} + \frac{\tau}{128}$. Similar to the argument we made for $\boldsymbol{\theta}_\star$ above, Lipschitzness implies, 
\begin{equation}\label{equation:closeness_predictions_tilde}
\left|\mu(f_{\tilde{\boldsymbol{\theta}}}(\mathbf{x}))- \mu(f_{\tilde{\boldsymbol{\theta}}}(\mathbf{x}'))\right|   \leq \frac{L\tau^2}{4\times 128L} = \frac{\tau^2}{128\times4} \leq \frac{\tau}{128\times4}
\end{equation}

Combining these observations we conclude that 
\begin{equation}\label{equation::sandwich_equation}
\mu(f_{\tilde{\boldsymbol{\theta}}}(\mathbf{x})) < \frac{1}{2} + \frac{\tau}{128} < \frac{1}{2} + \frac{4\tau}{25} \leq \mu(f_{\boldsymbol{\theta}_\star}(\mathbf{x})),\quad \forall \mathbf{x} \in \mathcal{D}_t.    
\end{equation}

Let's now consider $\boldsymbol{\theta}_{\mathrm{const}}$ be a parameter such that $f_{\boldsymbol{\theta}_{\mathrm{constant}}}(\mathbf{x}) = \mu^{-1}(\frac{1}{2} + \frac{487\tau}{6400})$ so that $\mu(f_{\boldsymbol{\theta}_{\mathrm{constant}}}(\mathbf{x})) = \frac{1}{2} + (\frac{4}{25} - \frac{1}{128})/2 = \frac{1}{2} + \frac{487\tau}{6400}$ for all $\mathbf{x} \in \mathcal{D}_t$. This is a constant classifier whose responses lie exactly midway between the lower bounds for the predictions of $\boldsymbol{\theta}_\star$ and $\tilde{\boldsymbol{\theta}}$. 

Denote by $\mathcal{D}_t(1) = \{  (\mathbf{x}, y) \in \mathcal{D}_t \text{ s.t. } y = 1\}$ and $\mathcal{D}_t(0) = \{  (\mathbf{x}, y) \in \mathcal{D}_t \text{ s.t. } y = 0\}$. 

Recall that $\mathcal{L}^0( \boldsymbol{\theta} | \mathcal{D}_t) = \sum_{(\mathbf{x}, y ) \in \mathcal{D}_t} -y \log\left(\mu(f_{\boldsymbol{\theta}}(\mathbf{x})\right) - (1-y)\log\left(1- \mu(f_{\boldsymbol{\theta}}(\mathbf{x})\right) $. Hence,
\begin{align*}
    \mathcal{L}^0(\tilde{\boldsymbol{\theta}}| \mathcal{D}_t) - \mathcal{L}^{0}(\boldsymbol{\theta}_{\mathrm{constant}} | \mathcal{D}_t) &= \sum_{(\mathbf{x}, y) \in \mathcal{D}_t} y \log\left( \frac{\mu(f_{\boldsymbol{\theta}_{\mathrm{constant}}}(\mathbf{x}))}{\mu(f_{\tilde{\boldsymbol{\theta}}}(\mathbf{x}))}    \right)+ \\
    &\qquad (1-y) \log\left( \frac{1-\mu(f_{\boldsymbol{\theta}_{\mathrm{constant}}}(\mathbf{x}))}{1-\mu(f_{\tilde{\boldsymbol{\theta}}}(\mathbf{x}))}    \right) \\
    &= \sum_{(\mathbf{x}, y) \in \mathcal{D}_t(1)} \log\left( \frac{\mu(f_{\boldsymbol{\theta}_{\mathrm{constant}}}(\mathbf{x}))}{\mu(f_{\tilde{\boldsymbol{\theta}}}(\mathbf{x}))}    \right) + \\
    &\quad \sum_{(\mathbf{x}, y) \in \mathcal{D}_t(0)} \log\left( \frac{1-\mu(f_{\boldsymbol{\theta}_{\mathrm{constant}}}(\mathbf{x}))}{1-\mu(f_{\tilde{\boldsymbol{\theta}}}(\mathbf{x}))}    \right) 
\end{align*}

By Equations~\ref{equation:closeness_predictions_tilde},~\ref{equation::sandwich_equation}, %

\begin{align*}
        \min_{0 \leq z \leq \frac{1}{2}  + \frac{\tau}{128} } |\mathcal{D}_t(1)| \log\left( \frac{\mu(f_{\boldsymbol{\theta}_{\mathrm{constant}}}(\mathbf{x}))}{z}    \right) + |\mathcal{D}_t(0)| \log\left( \frac{1-\mu(f_{\boldsymbol{\theta}_{\mathrm{constant}}}(\mathbf{x}))}{1-z+ \frac{\tau^2}{512}}    \right) \\
        \leq  \mathcal{L}^0(\tilde{\boldsymbol{\theta}}| \mathcal{D}_t) - \mathcal{L}^{0}(\boldsymbol{\theta}_{\mathrm{constant}} | \mathcal{D}_t)
\end{align*}

Notice that $\mu(f_{\boldsymbol{\theta}_\mathrm{constant}}(\mathbf{x})) = \frac{1}{2}+\frac{487\tau}{6400} > \frac{1}{2} +\frac{\tau}{128} + \frac{\tau^2}{512}$ and therefore $ \log\left( \frac{1-\mu(f_{\boldsymbol{\theta}_{\mathrm{constant}}}(\mathbf{x}))}{1-z+ \frac{\tau^2}{512}}    \right) \leq 0$ for all $z \leq \frac{1}{2} + \frac{\tau}{128}$. 

 Recall that by Assumption~\ref{assumption::gap}, the gap $\tau \in (0,1)$ and therefore by Lemma~\ref{lemma:property_logistic1}, $\mu(\tau) \geq \frac{1}{2} + c \tau$ where $c \approx .196$. Let's try showing that $\frac{ \mathcal{L}^0(\tilde{\boldsymbol{\theta}}| \mathcal{D}_t) - \mathcal{L}^{0}(\boldsymbol{\theta}_{\mathrm{constant}} | \mathcal{D}_t)}{|\mathcal{D}_t| } > 0$. Since by assumption $\frac{|\mathcal{D}_t(1)|}{|\mathcal{D}_t|} \geq \frac{1}{4} + \frac{\mu(\tau)}{2} \geq \frac{1}{2} + \frac{c\tau}{2} \geq \frac{1}{2} + \frac{49 \tau}{500}$, this statement holds if
 
 \begin{equation}\label{equation::optimization_problem}
 \min_{0 \leq z \leq \frac{1}{2}+ \frac{\tau}{128}}   \left( \frac{1}{2} + \frac{49 \tau}{500} \right) \log\left( \frac{\frac{1}{2} + \frac{487 \tau}{6400}   }{ z }\right) + \left( \frac{1}{2} - \frac{49 \tau}{500}\right)\log\left( \frac{\frac{1}{2} - \frac{487 \tau}{6400} }{1-z + \frac{\tau^2}{512}}\right) > 0
 \end{equation}

for all $\tau \in (0,1)$. The optimization problem corresponding to $z$ can be considered first. Let $g_\tau(z) =  \left( \frac{1}{2} + \frac{49 \tau}{500} \right) \log\left(\frac{1}{z}\right) +  \left( \frac{1}{2} - \frac{49 \tau}{500} \right) \log\left(\frac{1}{1-z + \frac{\tau^2}{512}}\right) $. The derivative of $g_\tau$ w.r.t $z$ equals,
\begin{equation*}
    \frac{\partial g_\tau(z) }{\partial z} =  \frac{\left( \frac{1}{2} + \frac{49 \tau}{500} \right)}{z} - \frac{ \left( \frac{1}{2} - \frac{49 \tau}{500} \right) }{1-z+\frac{\tau^2}{512} }  
\end{equation*}

Thus, this expression has a single minimizer at 
\begin{equation*}
z^*(\tau) = \frac{\frac{1}{2} + \frac{\tau^2}{1024} + \frac{49\tau^3}{500} + \frac{49\tau^2}{512*500}}{1-\frac{49\tau}{250}}
\end{equation*}
A simple algebraic substitution shows us that $z^*(\tau) \geq \frac{1}{2} + \frac{\tau}{128}$. Thus the right value to substitute for $z$ in the expression above equals the boundary point $\frac{1}{2} + \frac{\tau}{128}$. Substituting this expression back into the optimization problem~\ref{equation::optimization_problem} it remains to show that for all $\tau \in (0,1)$,

\begin{equation*}%
   \left( \frac{1}{2} + \frac{49 \tau}{500} \right) \log\left( \frac{\frac{1}{2} + \frac{487 \tau}{6400}   }{ \frac{1}{2} + \frac{\tau}{128} }\right) + \left( \frac{1}{2} - \frac{49 \tau}{500}\right)\log\left( \frac{\frac{1}{2} - \frac{487 \tau}{6400} }{\frac{1}{2} - \frac{\tau}{128} + \frac{\tau^2}{512} }\right) > 0
 \end{equation*}

The last expression can be rewritten as,

\begin{align*}
    D_{\mathrm{KL}}\left( \frac{1}{2} + \frac{487 \tau}{6400}   \parallel \frac{1}{2} + \frac{\tau}{128}  \right) + \underbrace{ \left(\frac{49\tau}{500} - \frac{487\tau}{6400}\right)   \left( \log\left( \frac{\frac{1}{2} + \frac{487 \tau}{6400}   }{ \frac{1}{2} + \frac{\tau}{128} }\right) - \log\left( \frac{\frac{1}{2} - \frac{487 \tau}{6400} }{\frac{1}{2} - \frac{\tau}{128}  }\right) \right)}_{\geq 0} + \\
\left( \frac{1}{2} - \frac{49 \tau}{500}\right)\log\left( \frac{\frac{1}{2} - \frac{\tau}{128}  }{\frac{1}{2} - \frac{\tau}{128} + \frac{\tau^2}{512} }\right)
\end{align*}

By Pinsker's inequality (see Lemma~\ref{lemma::pinsker_inequality} ),

\begin{equation*}
     D_{\mathrm{KL}}\left( \frac{1}{2} + \frac{487 \tau}{6400}   \parallel \frac{1}{2} + \frac{\tau}{128}  \right) \geq \frac{1}{2\ln(2)} \left( 2* \left( \frac{487}{6400} - \frac{1}{128}\right)\tau \right)^2 \geq 0.013 \tau^2.
\end{equation*}

The following inequalities also hold,

\begin{equation*}
    \frac{\frac{1}{2} - \frac{\tau}{128}  }{\frac{1}{2} - \frac{\tau}{128} + \frac{\tau^2}{512} } = 1- \frac{\frac{\tau^2}{512}}{\frac{1}{2} - \frac{\tau}{128} + \frac{\tau^2}{512}} \geq 1-\left(\frac{\tau^2}{512}\right)/(\frac{1}{2}) = 1-\frac{\tau^2}{256}.
\end{equation*}

Since for all $x \leq \frac{1}{256}$ we have that $g(x) = \log(1-x) + 2x$ is increasing for $ 0 \leq x \leq \frac{1}{2}$ and sinc $\tau\leq 1$ this implies that 

\begin{equation*}
    \log\left( \frac{\frac{1}{2} - \frac{\tau}{128}  }{\frac{1}{2} - \frac{\tau}{128} + \frac{\tau^2}{512} }\right) \geq \log\left(   1-\frac{\tau^2}{256}  \right) \geq -\frac{\tau^2}{128}. 
\end{equation*}

Therefore,

\begin{equation*}
    \left( \frac{1}{2} - \frac{49 \tau}{500}\right)\log\left( \frac{\frac{1}{2} - \frac{\tau}{128}  }{\frac{1}{2} - \frac{\tau}{128} + \frac{\tau^2}{512} }\right) \geq  -\left( \frac{1}{2} - \frac{49 \tau}{500}\right)  \frac{\tau^2}{128} = -\frac{201}{64000} \tau > -0.004\tau^2
\end{equation*}

Therefore, 

\begin{equation*}
     \left( \frac{1}{2} + \frac{49 \tau}{500} \right) \log\left( \frac{\frac{1}{2} + \frac{487 \tau}{6400}   }{ \frac{1}{2} + \frac{\tau}{128} }\right) + \left( \frac{1}{2} - \frac{49 \tau}{500}\right)\log\left( \frac{\frac{1}{2} - \frac{487 \tau}{6400} }{\frac{1}{2} - \frac{\tau}{128} + \frac{\tau^2}{512} }\right) \geq 0.013\tau^2 - 0.004\tau^2 = 0.009\tau^2.
\end{equation*}

Since $\boldsymbol{\theta}_{\mathrm{constant}}$ parametrizes an $L-$Lipschitz function, $f_{\boldsymbol{\theta}_{\mathrm{constant}}}$ this finalizes the result. It implies the constant classifier has a better loss than any classifier having at least one negative label.

\end{proof}

The reverse version of Lemma~\ref{lemma::logistic_objective_properties_reverse} also holds.

\begin{lemma}[Reverse version of Lemma~\ref{lemma::logistic_objective_properties}]\label{lemma::logistic_objective_properties_reverse}
If $\mathcal{D}_t$ satisfies the following properties,
\begin{enumerate}
    \item $\forall (\mathbf{x}, y), (\mathbf{x}', y') \in \mathcal{D}_t$ it holds that $\| \mathbf{x} - \mathbf{x}'\|\leq \frac{\tau^2}{128L}$.
    \item There exists $(\tilde{\mathbf{x}}, y) \in \mathcal{D}_t$ such that $f_{\boldsymbol{\theta}_\star}(\mathbf{x}) < 0$.
    \item $\hat{y} = \frac{1}{|\mathcal{D}_t|} \sum_{(\mathbf{x}, y) \in \mathcal{D}_t} y \leq \frac{1}{4} + \frac{\mu(-\tau)}{2}$. %
\end{enumerate}
Then,
\begin{equation*}
    \widehat{\boldsymbol{\theta}}_t = \argmin_{\mathbf{\theta}} \bar{\mathcal{L}}(\boldsymbol{\theta}| \mathcal{D}_t) 
\end{equation*}
Satisfies, $f_{\widehat{\boldsymbol{\theta}}_t}(\mathbf{x}) < 0$ for all $(\mathbf{x},y) \in \mathcal{D}_t$.
\end{lemma}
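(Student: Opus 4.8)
The plan is to deduce the reverse statement directly from Lemma~\ref{lemma::logistic_objective_properties} by a label-flipping symmetry, rather than reproducing the KL/Pinsker estimate. First I would introduce the flipped dataset $\mathcal{D}_t' = \{ (\mathbf{x}, 1-y) : (\mathbf{x}, y) \in \mathcal{D}_t\}$ together with the negated class $-\mathcal{F}_\Theta = \{ -f : f \in \mathcal{F}_\Theta \}$. Negation preserves the Lipschitz constant, and the admissible set of constant functions $[-2,2]$ is symmetric about $0$, so by Assumption~\ref{assumption::lipschitz_F} the class $-\mathcal{F}_\Theta$ is again an admissible $L$-Lipschitz class containing all constants. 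Moreover $-f_{\boldsymbol{\theta}_\star}$ lies in it with $|-f_{\boldsymbol{\theta}_\star}(\mathbf{x})| = |f_{\boldsymbol{\theta}_\star}(\mathbf{x})| \geq \tau$, so it is a legitimate realizable true model for $\mathcal{D}_t'$ under the same $\tau$-gap of Assumption~\ref{assumption::gap}, with label law consistent because $\mu(-z) = 1 - \mu(z)$.

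Next I would verify that the three hypotheses of Lemma~\ref{lemma::logistic_objective_properties} hold for the triple $(-\mathcal{F}_\Theta,\, -f_{\boldsymbol{\theta}_\star},\, \mathcal{D}_t')$. Hypothesis~1 is unchanged, since flipping labels does not move the points $\mathbf{x}$. Hypothesis~2 becomes the existence of $\tilde{\mathbf{x}}$ with $-f_{\boldsymbol{\theta}_\star}(\tilde{\mathbf{x}}) > 0$, i.e. $f_{\boldsymbol{\theta}_\star}(\tilde{\mathbf{x}}) < 0$, which is exactly hypothesis~2 of the reverse lemma. For hypothesis~3, the flipped empirical mean is $\hat{y}' = 1 - \hat{y}$, and using $\tfrac{\mu(-\tau)}{2} = \tfrac{1-\mu(\tau)}{2} = \tfrac12 - \tfrac{\mu(\tau)}{2}$ one checks that the reverse hypothesis $\hat{y} \leq \tfrac14 + \tfrac{\mu(-\tau)}{2}$ is equivalent to $\hat{y}' \geq \tfrac34 - \tfrac{\mu(-\tau)}{2} = \tfrac14 + \tfrac{\mu(\tau)}{2}$, which is precisely hypothesis~3 of the forward lemma.

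The crucial observation is that the normalized cross-entropy loss is invariant under the simultaneous flip $(y, f) \mapsto (1-y, -f)$: since $\mu(-z) = 1 - \mu(z)$, each summand $-y\log\mu(f) - (1-y)\log(1-\mu(f))$ is carried exactly to itself. Hence $\bar{\mathcal{L}}(-f \mid \mathcal{D}_t') = \bar{\mathcal{L}}(f \mid \mathcal{D}_t)$ for every $f \in \mathcal{F}_\Theta$, so minimizing the flipped loss over $-\mathcal{F}_\Theta$ is the same optimization as minimizing the original loss over $\mathcal{F}_\Theta$, and the minimizers are related by $f_{\widehat{\boldsymbol{\theta}}_t'} = -f_{\widehat{\boldsymbol{\theta}}_t}$. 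Applying Lemma~\ref{lemma::logistic_objective_properties} to the flipped problem yields $f_{\widehat{\boldsymbol{\theta}}_t'}(\mathbf{x}) > 0$ for all $(\mathbf{x},y) \in \mathcal{D}_t'$, which translates back to $f_{\widehat{\boldsymbol{\theta}}_t}(\mathbf{x}) < 0$ for all $(\mathbf{x},y) \in \mathcal{D}_t$, as claimed.

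The only delicate point, and the one I would take care to justify explicitly, is the transfer of the realizability assumptions under negation: one must confirm that $-\mathcal{F}_\Theta$ is admissible in the sense of Assumption~\ref{assumption::lipschitz_F} and, more importantly, that the proof of Lemma~\ref{lemma::logistic_objective_properties} used only the structural properties of an arbitrary such class (Lipschitzness, the availability of the constant comparison classifier, the gap, and properties of $\mu$) rather than anything specific to $\mathcal{F}_\Theta$. Once that is checked, the lemma may be legitimately instantiated with $-\mathcal{F}_\Theta$, and no new numerical estimate is required; everything else is the bookkeeping above.
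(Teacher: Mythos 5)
Your proof is correct, and it is actually more explicit than the paper's own, whose entire argument for this lemma is the single sentence that the proof of Lemma~\ref{lemma::logistic_objective_properties} ``applies to this setting'' --- i.e., the reader is implicitly asked to re-run the forward argument with all signs flipped. You instead package the symmetry as a formal black-box reduction: flip the labels, negate the class, observe via $\mu(-z) = 1-\mu(z)$ that each cross-entropy summand is invariant under $(y, f) \mapsto (1-y, -f)$ so that minimizers correspond as $f_{\widehat{\boldsymbol{\theta}}_t'} = -f_{\widehat{\boldsymbol{\theta}}_t}$, and check that the three hypotheses transfer; your arithmetic $\hat{y}' = 1 - \hat{y} \geq \frac{3}{4} - \frac{\mu(-\tau)}{2} = \frac{1}{4} + \frac{\mu(\tau)}{2}$ is exactly right. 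What the reduction buys is that none of the Pinsker/KL numerics of the forward proof need to be repeated or sign-checked; what it costs is precisely the caveat you flag yourself: Lemma~\ref{lemma::logistic_objective_properties} is stated for the fixed class $\mathcal{F}_\Theta$, so one must read it as a statement about an arbitrary $L$-Lipschitz class containing the constants in $[-2,2]$, paired with any $\tau$-gap true model generating the labels. Inspecting the forward proof confirms this reading: its only ingredients are Lipschitzness of the class, the availability of the constant comparison classifier (and the negation of the constant $\mu^{-1}\left(\frac{1}{2} + \frac{487\tau}{6400}\right)$ still lies in $[-2,2]$, so it is present in $-\mathcal{F}_\Theta$), the gap property of the true model (preserved since $\left|-f_{\boldsymbol{\theta}_\star}\right| = \left|f_{\boldsymbol{\theta}_\star}\right| \geq \tau$, with the flipped label law consistent because $\mu(-z) = 1-\mu(z)$), and generic properties of $\mu$ --- nothing specific to $\mathcal{F}_\Theta$. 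So the instantiation with $-\mathcal{F}_\Theta$ is legitimate, and your argument is a sound, arguably tighter, substitute for the paper's unexecuted ``by symmetry.''
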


\begin{proof}
The proof of Lemma~\ref{lemma::logistic_objective_properties} applies to this setting.  
\end{proof}

We'll use the notation $\mathcal{D}_t(1, R, \mathbf{x}) = \{ (\mathbf{x},y) \in \mathcal{D}_t(R, \mathbf{x}) \text{ s.t. }    y = 1 \} $ and $\mathcal{D}_t(0, R, \mathbf{x}) = \{    (\mathbf{x},y) \in \mathcal{D}_t(R, \mathbf{x})\text{ s.t. }    y = 0\}$ and $\hat{y}(\mathbf{x}) = \frac{|\mathcal{D}_t(1, \frac{\tau^2}{128 L}, \mathbf{x})|}{|\mathcal{D}_t( \frac{\tau^2}{128 L}, \mathbf{x})|}$

Let's consider a $\frac{\tau^2}{256L}$-cover $\mathcal{N}(B, \frac{\tau^2}{256L})$ of the radius $B$-ball (for an in depth discussion of properties of $\epsilon-$covers see Chapter 5 of~\cite{wainwright2019high} ) in $\mathbb{R}^d$. We will further refine this cover into one made of disjoint subsets. It is easy to see that such a cover can be constructed out of a covering made of possibly overlapping balls via the following steps.  We further trim the cover to be made of regions all with positive probability under $\mathcal{P}_\mathcal{X}$. %

\begin{enumerate}
    \item Since $\mathcal{N}( B, \frac{\tau^2}{256L})$ is finite any point $\mathbf{x} \in B(\mathbf{0}, B)$ lies in the intersection of finitely many elements from $\mathcal{N}( B, \frac{\tau^2}{256L})$. 
    \item For each $n \in | \mathcal{N}( B, \frac{\tau^2}{256L})| $ the subset of points of $B(\mathbf{0}, B)$ that lie in the intersection of exactly $n$ balls from  $\mathcal{N}(B, \frac{\tau^2}{256L})$ is a finite collection of connected subsets. 
    \item For each region alluded in the previous item and within each ball of $\mathcal{N}(B, \frac{\tau^2}{256L})$, assign a specific ball to be the one preserving that region. All of this is possible because these sets are finite. 
    \item The previous procedure induces the desired disjoint covering.
\end{enumerate}

Let $\tilde{\mathcal{N}}( B, \frac{\tau^2}{256L})$ be that cover. For any $\mathbf{x} \in B(\mathbf{0}, B)$ we will use the notation $s(\mathbf{x})$ to denote the element of $\tilde{\mathcal{N}}( B, \frac{\tau^2}{256L})$ containing $\mathbf{x}$ and $b(\mathbf{x})$ to denote the center of the ball (inherited from the original covering) whose modified version ($s(\mathbf{x})$) in $\tilde{\mathcal{N}}( B, \frac{\tau^2}{256L})$ contains $\mathbf{x}$. 

Let's define a quantized population distribution $\mathcal{P}^{b}$ over $\left\{ \bar{\mathbf{x}} \text{ s.t. } \bar{\mathbf{x}} \text{ is a `center" of an element in  } \tilde{\mathcal{N}}( B, \frac{\tau^2}{256L}) \right\}\times \{0,1\}$ with probabilities $\mathcal{P}^b(\bar{\mathbf{x}}) = \int_{\mathbf{x} \text{ s.t. } b(\mathbf{x}) = \bar{\mathbf{x}}} \mathcal{P}_{\mathcal{X}}(\mathbf{x})  d\mathbf{x} $ for $\bar{\mathbf{x}} \in \mathcal{N}(B, \frac{\tau^2}{256L})$. And $\mathcal{P}^b( \bar{y} = 1 | \bar{\mathbf{x}})  = \frac{\int_{\mathbf{x} \text{ s.t. } b(\mathbf{x}) = \bar{\mathbf{x}}}\mathcal{P}(y=1, \mathbf{x} )dx}{\mathcal{P}^b(\bar{\mathbf{x}})}$.

For any $x \in B(\mathbf{0}, B)$ we define $\bar{y}( \bar{\mathbf{x}}, R ) = \mathcal{P}_{\mathbf{x} \sim \mathcal{P}_\mathcal{X}, y \sim \mathrm{Ber}(\mu(f_{\boldsymbol{\theta}_\star}(\mathbf{x})))}( y = 1 | \mathbf{x} \in B(\bar{\mathbf{x}}, R) ) $ to be the conditional

Let $\mathbf{x} \in \mathbf{supp}(\mathcal{P}_\mathcal{X})$ be any point in the support of $\mathcal{P}_\mathcal{X}$. By $\bar{\mathbf{x}} = b(\mathbf{x}) $ from $\mathcal{N}(B, \frac{\tau^2}{256L})$ satisfies $B(\bar{x},\frac{\tau^2}{128L} ) \subset B(\mathbf{x}, \frac{\tau^2}{128L})$. Consequently at any time $t$, point $\mathbf{x}_t$ satisfies $\{ \mathbf{x} \in \mathcal{D}_t \text{ s.t. } \mathbf{x} \in s(\mathbf{x}_t)  \} \subseteq \mathcal{D}_t( \frac{\tau^2}{128L}  , \mathbf{x}_t) $.

The following concentration result will prove useful,

\begin{lemma}[Hoeffding Inequality]\label{lemma::matingale_concentration_anytime}
Let $\{M_t\}_{t=1}^\infty$ be a martingale difference sequence with $| M_t | \leq \zeta$ and let $\delta \in (0,1]$. Then with probability $1-\delta$ for all $T \in \mathbb{N}$
\begin{equation*}
    \sum_{t=1}^T M_t \leq 2\zeta \sqrt{T \ln \left(\frac{6\ln T}{\delta}\right) }.
\end{equation*}
\end{lemma}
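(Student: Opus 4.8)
The plan is to prove this as a time-uniform (anytime) version of the Azuma--Hoeffding inequality, via the standard exponential-supermartingale construction followed by a peeling (stitching) argument over a geometric grid of time scales. The starting point is the one-step sub-Gaussian control: since $\{M_t\}$ is a martingale difference sequence with $|M_t|\le\zeta$, Hoeffding's lemma applied conditionally on the past filtration $\mathcal{F}_{t-1}$ gives, for every $\lambda\in\mathbb{R}$,
\begin{equation*}
\mathbb{E}\!\left[e^{\lambda M_t}\mid \mathcal{F}_{t-1}\right]\le e^{\lambda^2\zeta^2/2}.
\end{equation*}
Writing $S_t=\sum_{s=1}^t M_s$, this shows that for any fixed $\lambda>0$ the process $W_t^\lambda=\exp\!\big(\lambda S_t-\tfrac{\lambda^2\zeta^2}{2}t\big)$ is a nonnegative supermartingale with $\mathbb{E}[W_0^\lambda]=1$.

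Next I would convert this into a fixed-$\lambda$, time-uniform tail bound. Applying Ville's maximal inequality for nonnegative supermartingales to $W_t^\lambda$, with probability at least $1-\delta'$ one has $W_t^\lambda\le 1/\delta'$ for all $t$ simultaneously, i.e.
\begin{equation*}
S_t\le \frac{\ln(1/\delta')}{\lambda}+\frac{\lambda\zeta^2 t}{2}\qquad\text{for all }t.
\end{equation*}
For a single $\lambda$ this is only tight at one time scale (the $t$ for which $\lambda=\sqrt{2\ln(1/\delta')/(\zeta^2 t)}$), so on its own it cannot match the target $\sqrt{T\ln(\cdots)}$ rate uniformly in $T$.

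The core of the argument is then the peeling step. I would partition $\mathbb{N}$ into dyadic blocks $B_k=\{t:2^{k-1}\le t<2^k\}$, $k\ge1$, choose for each block the tuning $\lambda_k=\sqrt{2\ln(1/\delta_k)/(\zeta^2 2^k)}$ matched to its scale, and allocate confidence levels $\{\delta_k\}$ with $\sum_k\delta_k\le\delta$. Substituting $\lambda_k$ into the fixed-$\lambda$ bound and using $t\asymp 2^k$ on $B_k$ (so that $2^k\le 2t$) collapses both terms to the same order, yielding on the block containing $T$
\begin{equation*}
S_T\le 2\zeta\sqrt{T\,\ln(1/\delta_k)},
\end{equation*}
which already produces the leading constant $2\zeta$ and the $\sqrt{T}$ dependence. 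A union bound over the countably many blocks makes this hold for all $T\in\mathbb{N}$ simultaneously with probability at least $1-\delta$, and translating the block index $k\asymp\log_2 T$ back into $T$ turns $\ln(1/\delta_k)$ into a term of order $\ln(\mathrm{polylog}(T)/\delta)$, giving the claimed $\ln(6\ln T/\delta)$.

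The main obstacle is precisely this last bookkeeping in the peeling step: one must choose $\{\delta_k\}$ so that it is summable to $\delta$ \emph{and} so that, for the block containing any $T$, the resulting bound matches the stated form. There is a genuine tension here, since the tightest summable allocations (e.g.\ $\delta_k\propto k^{-2}$) contribute an iterated-logarithm correction of order $\ln\ln T$, so recovering the exact constant $6$ and the precise argument $6\ln T/\delta$ requires careful constant-chasing (and a mild, harmless over-counting of logarithmic factors, which is immaterial since Theorem~\ref{theorem::theorem.PLOT} only tracks these through $\widetilde{\mathcal{O}}(\cdot)$). The remaining ingredients---Hoeffding's lemma, the supermartingale property, and Ville's maximal inequality---are entirely standard.
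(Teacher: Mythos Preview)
Your approach is correct and is the standard proof of this anytime Azuma--Hoeffding bound: conditional sub-Gaussianity from Hoeffding's lemma, the exponential supermartingale $W_t^\lambda$, Ville's maximal inequality, and a dyadic peeling/stitching with per-block tuning $\lambda_k$ and a summable confidence allocation $\{\delta_k\}$. The constant chasing you flag is real but routine, and as you note it is immaterial for how the lemma is subsequently used.

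The paper, however, does not prove this lemma at all: it simply cites an external reference (Lemma~A.1 of~\cite{chatterji2021theory}) and moves on. So there is no ``paper's own proof'' to compare against beyond that pointer. Your sketch is in fact the same argument that underlies the cited result (supermartingale plus stitching over geometric epochs), so you are supplying what the paper outsources. The only thing worth tightening if you want to match the stated constants exactly is the edge case $T=1$ (where $\ln T=0$ makes the bound vacuous or ill-defined) and the precise choice of $\delta_k$ that yields the factor $6$; both are cosmetic.
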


for a proof see Lemma A.1 from~\cite{chatterji2021theory}.

Let $\mathbf{x}$ be a fixed point in $B(\mathbf{0}, B)$. Let's define the martingale sequences ${M}^{(1)}_t(\mathbf{x}) = \mathcal{P}_{\mathcal{X}}(\tilde{\mathbf{x}} \in B(\mathbf{x}, \frac{\tau}{128L}) ) - \mathbf{1}\left\{ \mathbf{x}_t \in B(\mathbf{x}, \frac{\tau^2}{128L})  \right\} $ and ${M}^{(2)}_t(\mathbf{x}) = \mathbf{1}\left\{ \mathbf{x}_t \in B(\mathbf{x}, \frac{\tau^2}{128L})  \right\}\cdot \left(  y_t - \bar{y}(\mathbf{x}, \frac{\tau^2}{128 L})   \right) $. As a consequence of Lemma~\ref{lemma::matingale_concentration_anytime} we see that with probability at least $1-\delta$ for all $t \in \mathbb{N}$,

\begin{equation}\label{equation::concentration_first}
    \sum_{t=1}^T M^{(1)}_t \leq 4 \sqrt{t \ln\left( \frac{6\ln t}{\delta}\right) }
\end{equation}

Let's define this event as $\mathcal{E}_1(\delta)$. And similarly with probability at least $1-2\delta$ for all $t \in \mathbb{N}$,

\begin{equation}\label{equation::concentration_second}
    \left| \sum_{t=1}^T M^{(2)}_t \right| \leq 4 \sqrt{t \ln\left( \frac{6\ln t}{\delta}\right) }
\end{equation}

Let's define this event as $\mathcal{E}_2(\delta)$.

Let $p_{\min} = \min_{s \in \tilde{\mathcal{N}}(B, \frac{\tau^2}{256L} ) } \mathcal{P}_{\mathcal{X}}(s)$. Equation~\ref{equation::concentration_first} implies that whenever $\mathcal{E}_1(\delta)$ holds, for all $t \in \mathcal{N}$

\begin{equation}\label{equation::UCB_first}
 p_{\min} t \leq  t \mathcal{P}_{\mathcal{X}}(\tilde{\mathbf{x}} \in B(\mathbf{x}, \frac{\tau}{128L}) ) \leq   \sum_{ \ell=1}^t \mathbf{1}\left\{ \mathbf{x}_\ell \in B(\mathbf{x}, \frac{\tau^2}{128L})  \right\} + 4 \sqrt{t \ln\left( \frac{6\ln t}{\delta}\right) }
\end{equation}

Let $t_0 \in \mathbb{N}$ be the first integer $t$ such that $p_{\min} t - 4 \sqrt{t \ln\left( \frac{6\ln t}{\delta}\right) } \geq \frac{p_{\min} t}{2}$. For all $t \geq t_0 $ we have that

\begin{equation*}
    \frac{p_{\min}}{2} t \leq  \sum_{ \ell=1}^t \mathbf{1}\left\{ \mathbf{x}_\ell \in B\left(\mathbf{x}, \frac{\tau^2}{128L}\right)  \right\} 
\end{equation*}
We will make use of the following supporting result,

\begin{lemma}\label{lemma::supporting_logarithm_lemma}
Let $c_1  \geq 1, c_2 > 0$. For all $t \geq 4c_1 \log(4c_1 c_2)$, 

\begin{equation*}
    t \geq c_1 \log(c_2 t)
\end{equation*}

\end{lemma}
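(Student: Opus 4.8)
The plan is to study the function $g(t) = t - c_1 \log(c_2 t)$ on $t > 0$ and show it is nonnegative on the relevant range. First I would compute its derivative (treating $\log$ as the natural logarithm) $g'(t) = 1 - c_1/t$, which is negative on $(0, c_1)$ and positive on $(c_1, \infty)$. Hence $g$ attains its global minimum at $t = c_1$, with value $g(c_1) = c_1\bigl(1 - \log(c_1 c_2)\bigr)$. This single observation reduces the whole problem to controlling $g$ at one or two distinguished points, rather than reasoning about the transcendental inequality directly.

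Next I would split into two cases according to the size of $c_1 c_2$. If $c_1 c_2 \le e$, then $g(c_1) \ge 0$, so $g(t) \ge 0$ for every $t > 0$, and the claimed inequality holds on the entire positive axis, a fortiori for $t \ge 4 c_1 \log(4 c_1 c_2)$. If instead $c_1 c_2 > e$, then $4 c_1 c_2 > 4e$, so $\log(4 c_1 c_2) > 1$ and therefore $t_0 := 4 c_1 \log(4 c_1 c_2) > 4 c_1 > c_1$. Since $g$ is increasing on $[c_1, \infty) \supseteq [t_0, \infty)$, it suffices to verify the single inequality $g(t_0) \ge 0$.

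To check $g(t_0)\ge 0$ I would substitute $t_0$ explicitly. Writing $w = \log(4 c_1 c_2)$ (which satisfies $w > 1$ in this case), the target $t_0 \ge c_1 \log(c_2 t_0)$ unwinds, after dividing by $c_1$ and using $\log(c_2 t_0) = \log\bigl(4 c_1 c_2 \cdot \log(4 c_1 c_2)\bigr) = w + \log w$, to the clean requirement $3 w \ge \log w$. This is immediate, since $\log w \le w \le 3w$ for $w > 1$. Combining the two cases establishes the lemma.

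The only place demanding care is the self-referential step $g(t_0) \ge 0$, where the argument of the logarithm on the right-hand side itself contains a logarithm; the substitution $w = \log(4 c_1 c_2)$ is exactly what linearizes this nested expression into the trivially true $3w \ge \log w$. I expect this to be the main (mild) obstacle, together with the bookkeeping needed to confirm $t_0 > c_1$ so that monotonicity applies. The constant $4$ in the hypothesis is what supplies the slack: it contributes a factor $4w$ on the left against the $w + \log w$ on the right, and any constant large enough to absorb the extra $\log w$ term would serve equally well, with $4$ being a convenient round choice.
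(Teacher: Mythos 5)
Your proof is correct, and it takes a genuinely different route from the paper's. The paper argues additively: it expands $c_1\log(c_2 t) = c_1\log c_2 + c_1\log t$, studies $g(t) = \frac{t}{2} - c_1\log t$ (increasing for $t \geq 2c_1$), verifies $g\bigl(4c_1\log(4c_1)\bigr) \geq 0$ via the same elementary fact $x \geq \log x$ that you use, and then merges the two sufficient conditions $t \geq 2c_1\log c_2$ and $t \geq 4c_1\log(4c_1)$ into the stated threshold by a final ``relaxation'' to $t \geq 4c_1\bigl(\log c_2 + \log(4c_1)\bigr)$. You instead treat the full function $t \mapsto t - c_1\log(c_2 t)$ monolithically: its global minimum sits at $t = c_1$ with value $c_1\bigl(1 - \log(c_1 c_2)\bigr)$, which settles the case $c_1 c_2 \leq e$ outright; otherwise you evaluate at the single point $t_0 = 4c_1 w$ with $w = \log(4c_1c_2) > 1$, where the substitution collapses the nested logarithm to the trivial $3w \geq \log w$, and monotonicity on $[c_1,\infty)$ (valid since $t_0 > 4c_1 > c_1$) finishes. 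Your route buys two things. First, it is tighter and shorter: one evaluation point, one case split, no additive bookkeeping. Second, and more substantively, it is valid for all $c_2 > 0$: the paper's closing step, passing from the sufficient condition $t \geq 2c_1\log c_2 + 4c_1\log(4c_1)$ to the hypothesis $t \geq 4c_1\log c_2 + 4c_1\log(4c_1)$, is only a genuine weakening when $\log c_2 \geq 0$, so as written the paper's argument establishes the lemma only for $c_2 \geq 1$ --- harmless for its downstream invocations, where $c_2$ is a large quantity of the form $6/\delta$ or worse, but a real restriction relative to the statement. What the paper's decomposition buys in exchange is modularity: the $c_2$-free computation $g\bigl(4c_1\log(4c_1)\bigr) \geq 0$ is isolated once, with $c_2$ entering only through a linear shift; since your single-point evaluation after the $w$-substitution is just as light, this advantage is largely stylistic.
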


\begin{proof}

The following fact will prove useful,
\begin{enumerate}
    \item The function $x\geq\ln(x)$ for all $x \geq 1$. 
    \begin{itemize}
        \item \textbf{Proof:} Let $g(x) = x -\ln(x)$, observe that $g(1) =0$ and $g'(x) = 1-\frac{1}{x} \geq 0$ for all $x \geq 1$. This finalizes the proof.
    \end{itemize}
\end{enumerate}

Let's start by expanding $c_1 \log(c_2 t) = c_1 \log(c_2) + c_1 \log(t)$. A necessary condition for the inequality $ \frac{t}{2} \geq c_1 \log(c_2 t)$ to hold is that $t \geq  c_1 \log(c_2)$. Consider the function $g(t) = \frac{t}{2} - c_1 \log(t)$. It's derivative equals $g'(t) = \frac{1}{2}-\frac{c_1}{t}$ which implies that $g$ is increasing for all $t \geq 2c_1$. 

Since $c_1 \geq 1$,

\begin{equation*}
     \log(4c_1) \geq \log \log( 4c_1)
\end{equation*}

Thus

\begin{equation*}
    4  \log(4c_1) \geq 2 \log( 4c_1) + 2( \log \log(4c_1)   )
\end{equation*}

And therefore

\begin{equation*}
    4c_1  \log(4c_1) \geq 2c_1 \log( 4c_1  \log(4c_1)   )
\end{equation*}

this implies that $g( 4c_1  \log(4c_1) ) \geq 0 $. The increasing nature of $g$ for all $t \geq 2c_1$ implies that as long as $t \geq  2c_1 \log(c_2)+ 4c_1  \log(4c_1) $, then $t \geq c_1 \log(c_2 t)$. We can relax that condition to $t \geq 4c_1\left( \log(c_2) +  \log(4c_1) \right) $, the result follows.

\end{proof}

We can derive a more precise bound for $t_0$ as follows,
\begin{lemma}\label{lemma::linear_lower_bound_lemma_helper}
With probability $1-\delta$, for all $t \geq \frac{256}{p_{\min}^2} \ln\left( \frac{768}{p_{\min}^2\delta}\right)$ we have that 
\begin{equation}\label{equation::linear_growth_support}
    \frac{p_{\min}}{2} t \leq  \sum_{ \ell=1}^t \mathbf{1}\left\{ \mathbf{x}_\ell \in B\left(\mathbf{x}, \frac{\tau^2}{128L}\right)  \right\}
\end{equation}
\end{lemma}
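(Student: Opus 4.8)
The plan is to convert the probabilistic conclusion into a deterministic statement on the high-probability event $\mathcal{E}_1(\delta)$ already established via Equation~\ref{equation::concentration_first}, and then to extract the explicit time threshold by invoking the self-referential bound of Lemma~\ref{lemma::supporting_logarithm_lemma}.

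First I would restrict to the event $\mathcal{E}_1(\delta)$, which holds with probability at least $1-\delta$. On this event, the anytime deviation bound \eqref{equation::UCB_first} gives, for every $t$,
\[
\sum_{\ell=1}^t \mathbf{1}\left\{ \mathbf{x}_\ell \in B\left(\mathbf{x}, \tfrac{\tau^2}{128L}\right) \right\} \;\geq\; p_{\min} t - 4\sqrt{t \ln\!\left(\tfrac{6\ln t}{\delta}\right)}.
\]
Hence to prove the claimed inequality $\frac{p_{\min}}{2}t \leq \sum_{\ell=1}^t \mathbf{1}\{\cdots\}$ it suffices to show that the fluctuation term is dominated by half of the mean term, i.e. that $\frac{p_{\min}}{2}t \geq 4\sqrt{t\ln(6\ln t/\delta)}$.

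Next I would square this and rearrange into the self-referential inequality $t \geq \frac{64}{p_{\min}^2}\ln\!\left(\frac{6\ln t}{\delta}\right)$. The nuisance here is the iterated logarithm $\ln\ln t$ hidden inside the right-hand side; I would dispose of it using $\ln\ln t \leq \ln t$ (valid once $t\geq e$), which upgrades the requirement to the sufficient condition $t \geq \frac{64}{p_{\min}^2}\ln\!\left(\frac{6t}{\delta}\right)$. This is now exactly of the form $t \geq c_1\ln(c_2 t)$ treated by Lemma~\ref{lemma::supporting_logarithm_lemma}, with $c_1 = \frac{64}{p_{\min}^2}$ and $c_2 = \frac{6}{\delta}$. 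Since $p_{\min}\leq 1$ we have $c_1 \geq 1$, so the lemma applies and yields the desired inequality for all $t \geq 4c_1\ln(4c_1c_2) = \frac{256}{p_{\min}^2}\ln\!\left(\frac{1536}{p_{\min}^2\delta}\right)$, which matches the stated threshold up to the explicit constant inside the logarithm.

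The main obstacle is bookkeeping rather than anything conceptual: carefully handling the $\ln\ln t$ term and tracking the numerical constants so that the final threshold collapses to the clean expression $\frac{256}{p_{\min}^2}\ln\!\left(768/(p_{\min}^2\delta)\right)$. A slightly sharper treatment of the iterated logarithm (for instance $\ln\ln t \leq \tfrac12\ln t$, which holds for all $t>1$) tightens the constant inside the logarithm to the advertised value. One subtlety worth verifying is that the threshold produced by Lemma~\ref{lemma::supporting_logarithm_lemma} also lies above the small-$t$ regime $t\geq e$ used to drop $\ln\ln t$, which is immediate for the stated range.
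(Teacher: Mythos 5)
Your proposal is correct and follows the same skeleton as the paper's proof: condition on the event $\mathcal{E}_1(\delta)$ from \eqref{equation::concentration_first}, reduce the claim to the deterministic inequality $\frac{p_{\min}}{2}t \geq 4\sqrt{t\ln(6\ln t/\delta)}$, square, and resolve the resulting self-referential logarithmic inequality $t \geq \frac{64}{p_{\min}^2}\ln(6\ln t/\delta)$. The one genuine divergence is the final step: the paper splits the right-hand side into $\ln(1/\delta)$ and $\ln(6\ln t)$ pieces, handles the iterated-logarithm piece by an explicit plug-in value $t=\frac{128}{p_{\min}^2}\ln\bigl((\frac{768}{p_{\min}^2})^2\bigr)$ together with a monotonicity argument, and never invokes Lemma~\ref{lemma::supporting_logarithm_lemma} in this proof, whereas you first coarsen $\ln\ln t \leq \ln t$ and then apply Lemma~\ref{lemma::supporting_logarithm_lemma} with $c_1 = 64/p_{\min}^2$, $c_2 = 6/\delta$. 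Your route is cleaner and reuses machinery the paper has already proved, but as you note it initially yields the threshold $\frac{256}{p_{\min}^2}\ln\bigl(\frac{1536}{p_{\min}^2\delta}\bigr)$, a factor of $2$ inside the logarithm worse than the stated $768$; your repair via $\ln\ln t \leq \frac{1}{2}\ln t$ (valid for all $t>1$, since $\ln u \leq u/2$ for $u>0$) is sound, giving the condition $t \geq \frac{32}{p_{\min}^2}\ln\bigl(\frac{36t}{\delta^2}\bigr)$ and hence, via the lemma with $c_1 = 32/p_{\min}^2$, the threshold $\frac{128}{p_{\min}^2}\ln\bigl(\frac{4608}{p_{\min}^2\delta^2}\bigr)$, which is dominated by the advertised $\frac{256}{p_{\min}^2}\ln\bigl(\frac{768}{p_{\min}^2\delta}\bigr)$ because $4608/p_{\min}^2 \leq 768^2/p_{\min}^4$ for $p_{\min}\leq 1$. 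So your version proves the lemma exactly as stated, and arguably in a more systematic way; the paper's bespoke substitution buys the clean constant directly, at the cost of a more opaque verification.
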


\begin{proof}
We only need to show that $t \geq \frac{256}{p_{\min}^2} \ln\left( \frac{768}{p_{\min}^2\delta}\right)$ is a sufficient choice for $t_0$. Recall $t_0$ is the first integer such that $p_{\min} t - 4 \sqrt{t \ln\left( \frac{6\ln t}{\delta}\right) } \geq \frac{p_{\min} t}{2}$.  The following two facts will prove useful,
\begin{enumerate}
    \item The function $x\geq\ln(x)$ for all $x \geq 1$. 
    \begin{itemize}
        \item \textbf{Proof:} Let $g(x) = x -\ln(x)$, observe that $g(1) =0$ and $g'(x) = 1-\frac{1}{x} \geq 0$ for all $x \geq 1$. This finalizes the proof.
    \end{itemize}
    \item The function $x \geq 2\ln(x)$ for all $x \geq 2$.
    \begin{itemize}
        \item \textbf{Proof:} Let $g(x) = x-\ln(2x)$, observe that $g(2)> 0 $ and that $g'(x) = 1-\frac{1}{x} \geq 0$ for all $x\geq 1$. This finalizes the proof.
    \end{itemize}
\end{enumerate}
The required inequality $p_{\min} t - 4 \sqrt{t \ln\left( \frac{6\ln t}{\delta}\right) } \geq \frac{p_{\min} t}{2}$ holds if 
\begin{equation*}
    \frac{p_{\min}}{2} t \geq  4 \sqrt{t \ln\left( \frac{6\ln t}{\delta}\right) } 
\end{equation*}
Which holds iff,
\begin{equation*}
    \frac{p^2_{\min}}{64} t \geq  \ln\left( \frac{6\ln t}{\delta}\right) = \ln\left( 6\ln(t) \right) + \ln\left(\frac{1}{\delta} \right) 
\end{equation*}
It is enough to set $t_0$ such that 
\begin{equation}\label{equation::t_0_requirement_one}
    \frac{p_{\min}^2t}{128} \geq \ln\left(\frac{1}{\delta} \right).
\end{equation}
and,
\begin{equation}\label{equation::t_0_requirement_two}
     \frac{p_{\min}^2t}{128} \geq \ln\left( 6\ln(t) \right).
\end{equation}

Equation~\ref{equation::t_0_requirement_one} yields the requirement 
\begin{equation}\label{equation::requirement_upper_bound}
    t \geq \frac{128}{p_{\min}^2} \ln\left(\frac{1}{\delta} \right)
\end{equation}

Let's observe that $g(t) = \frac{p_{\min}^2}{128}t - \ln (6\ln(t))$ is increasing whenever $g'(t) =\frac{p_{\min}^2}{128} - \frac{1}{\ln(t)t} \geq 0$. This holds if $\ln(t) t \geq \frac{128}{p_{\min}^2}$ which holds for $t \geq \frac{128}{p_{\min}^2}$.

We will deal with Equation~\ref{equation::t_0_requirement_two} by setting 
\begin{equation}\label{equation::requirement_upper_bound_two}
 t = \frac{128}{p_{\min}^2} \ln\left( \left( \frac{6\times 128}{p_{\min}^2} \right)^2   \right)   
\end{equation}
First observe this satisfies the previous requirement of $t \geq \frac{128}{p_{\min}^2}$. Let's see the last setting indeed works by noting this setting satisfies Inequality~\ref{equation::t_0_requirement_two} because we can show,

\begin{align*}
     \ln\left( \left( \frac{6\times 128}{p_{\min}^2} \right)^2   \right) &=  2\ln\left(  \frac{6\times 128}{p_{\min}^2}    \right) \\
     &\stackrel{(i)}{\geq} \ln\left(6\ln\frac{128}{p_{\min}^2} \right) +   \ln\left(  \frac{6\times 128}{p_{\min}^2}    \right)   \\
     &\stackrel{(ii)}{\geq} \ln\left(6\ln\frac{128}{p_{\min}^2} \right) + \ln\left( 2 \ln\left(  \frac{6\times 128}{p_{\min}^2}    \right) \right) \\
     &= \ln\left(6\ln\frac{128}{p_{\min}^2} \ln\left( \left( \frac{6\times 128}{p_{\min}^2} \right)^2   \right)\right) \\
\end{align*}

Inequality $(i)$ holds because as noted above for all $x \geq 1$, $x\geq \ln(x)$ and therefore $\frac{128}{p_{\min}^2} \geq \ln \frac{128}{p_{\min}^2}$ and therefore $\ln\left(  \frac{6\times 128}{p_{\min}^2}    \right) \geq  \ln\left(6\ln\frac{128}{p_{\min}^2} \right)$. Inequality $(ii)$ holds because as noted above for all $x \geq 2$, $x \geq \ln(2x)$ and therefore $\ln\left(  \frac{6\times 128}{p_{\min}^2}    \right) \geq \ln\left( 2 \ln\left(  \frac{6\times 128}{p_{\min}^2}    \right) \right)$. 

We conclude the proof by combining the condition from Equations~\ref{equation::requirement_upper_bound} and~\ref{equation::requirement_upper_bound_two} the condition on $t$ can be written as,

$$t \geq \max\left( \frac{128}{p_{\min}^2} \ln\left(\frac{1}{\delta} \right) ,  \frac{128}{p_{\min}^2} \ln\left( \left( \frac{6\times 128}{p_{\min}^2} \right)^2   \right)   \right) $$ 

And therefore, it is enough to set $t \geq \frac{256}{p_{\min}^2} \ln\left( \frac{768}{p_{\min}^2\delta}\right)$.

\end{proof}

As a consequence of Lemma~\ref{lemma::linear_lower_bound_lemma_helper} we conclude that provided $t$ is sufficiently large $\sum_{ \ell=1}^t \mathbf{1}\left\{ \mathbf{x}_\ell \in B\left(\mathbf{x}, \frac{\tau^2}{128L}\right)  \right\}$ grows at a linear rate with large probability. As a consequence of Equation~\ref{equation::concentration_second},

\begin{align*}
 \left|   \sum_{\ell=1}^t  \mathbf{1}\left\{ \mathbf{x}_\ell \in B\left(\mathbf{x}, \frac{\tau^2}{128L}\right)  \right\}\cdot \left(  \bar{y}\left(\mathbf{x}, \frac{\tau^2}{128 L}\right) -  y_\ell   \right) \right| \leq 4 \sqrt{t \ln\left( \frac{6\ln t}{\delta}\right) }
\end{align*}
thus,
\begin{align}\label{equation::lower_bound_frequency}
\bar{y}\left(\mathbf{x}, \frac{\tau^2}{128 L}\right)  \sum_{\ell=1}^t  \mathbf{1}\left\{ \mathbf{x}_\ell \in B\left(\mathbf{x}, \frac{\tau^2}{128L}\right)  \right\} - 4 \sqrt{t \ln\left( \frac{6\ln t}{\delta}\right) } \leq \sum_{\ell=1}^t   y_\ell  \mathbf{1}\left\{ \mathbf{x}_\ell \in B\left(\mathbf{x}, \frac{\tau^2}{128L}\right)  \right\}
\end{align}
and
\begin{align}\label{equation::upper_bound_frequency}
\sum_{\ell=1}^t   y_\ell \mathbf{1}\left\{ \mathbf{x}_\ell \in B\left(\mathbf{x}, \frac{\tau^2}{128L}\right)  \right\}  \leq \bar{y}\left(\mathbf{x}, \frac{\tau^2}{128 L}\right)  \sum_{\ell=1}^t  \mathbf{1}\left\{ \mathbf{x}_\ell \in B\left(\mathbf{x}, \frac{\tau^2}{128L}\right)  \right\} + 4 \sqrt{t \ln\left( \frac{6\ln t}{\delta}\right) } 
\end{align}
with probability $1-2\delta$ for all $t \geq 1$. Equation~\ref{equation::lower_bound_frequency} implies,

\begin{align*}
    \bar{y}\left(\mathbf{x}, \frac{\tau^2}{128 L}\right)  - \frac{4 \sqrt{t \ln\left( \frac{6\ln t}{\delta}\right) }}{ \sum_{\ell=1}^t  \mathbf{1}\left\{ \mathbf{x}_\ell \in B\left(\mathbf{x}, \frac{\tau^2}{128L}\right)  \right\}} &\leq \frac{\sum_{\ell=1}^t   y_\ell \cdot \mathbf{1}\left\{ \mathbf{x}_\ell \in B\left(\mathbf{x}, \frac{\tau^2}{128L}\right)  \right\} }{ \sum_{\ell=1}^t  \mathbf{1}\left\{ \mathbf{x}_\ell \in B\left(\mathbf{x}, \frac{\tau^2}{128L}\right)  \right\}} \\
    &= \frac{|\mathcal{D}_t\left(1, \frac{\tau^2}{128 L}, \mathbf{x}\right)|}{|\mathcal{D}_t\left( \frac{\tau^2}{128 L}, \mathbf{x}\right)|}
\end{align*}

 By Lemma~\ref{lemma::linear_lower_bound_lemma_helper} if $\mathcal{E}_1(\delta) \cap \mathcal{E}_2(\delta)$ hold, for all $t \geq \frac{256}{p_{\min}^2} \ln\left( \frac{768}{p_{\min}^2\delta}\right) $ Equation~\ref{equation::linear_growth_support} implies,

\begin{align*}
   \mu(\tau) - \frac{8 \sqrt{\ln\left( \frac{6\ln t}{\delta}\right) }}{p_{\min} \sqrt{t} }&\leq  \bar{y}\left(\mathbf{x}, \frac{\tau^2}{128 L}\right)  - \frac{8 \sqrt{\ln\left( \frac{6\ln t}{\delta}\right) }}{p_{\min} \sqrt{t} } \\
   &\leq   \bar{y}\left(\mathbf{x}, \frac{\tau^2}{128 L}\right)  - \frac{4 \sqrt{t \ln\left( \frac{6\ln t}{\delta}\right) }}{ \sum_{\ell=1}^t  \mathbf{1}\left\{ \mathbf{x}_\ell \in B\left(\mathbf{x}, \frac{\tau^2}{128L}\right)  \right\}}  \\
    & \leq \frac{|\mathcal{D}_t\left(1, \frac{\tau^2}{128 L}, \mathbf{x}\right)|}{|\mathcal{D}_t\left( \frac{\tau^2}{128 L}, \mathbf{x}\right)|}
\end{align*}

If $\mathbf{x}$ satisfies $f_{\boldsymbol{\theta}_\star}(\mathbf{x}) \geq \tau $ then $\bar{y}\left(\mathbf{x}, \frac{\tau^2}{128 L}\right)  \geq \mu(\tau)$ and therefore the same logic as in the proof of Lemma~\ref{lemma::linear_lower_bound_lemma_helper} implies that if in addition $t \geq \frac{128}{\left( \frac{\mu(\tau)}{4} - \frac{1}{8} \right)^2 p_{\min}^2 } \ln\left( \frac{384}{\left( \frac{\mu(\tau)}{4} - \frac{1}{8} \right)^2 p_{\min}^2 \delta } \right) $,
\begin{equation*}
    \frac{8 \sqrt{\ln\left( \frac{6\ln t}{\delta}\right) }}{p_{\min} \sqrt{t} } \leq \frac{\mu(\tau)}{4} - \frac{1}{8}
\end{equation*}

And therefore,

\begin{align*}
   \frac{1}{8} + \frac{3\mu(\tau)}{4} & \leq \frac{|\mathcal{D}_t\left(1, \frac{\tau^2}{128 L}, \mathbf{x}\right)|}{|\mathcal{D}_t\left( \frac{\tau^2}{128 L}, \mathbf{x}\right)|}
\end{align*}

Thus,
\begin{corollary}\label{corollary::main_first_corollary_support}
If $\mathcal{E}_1(\delta) \cap \mathcal{E}_2(\delta)$ holds and $f_{\boldsymbol{\theta}_\star}(\mathbf{x}  ) \geq \tau$ then for all $t \geq  \frac{256}{\left( \frac{\mu(\tau)}{4} - \frac{1}{8} \right)^2 p_{\min}^2 } \ln\left( \frac{768}{\left( \frac{\mu(\tau)}{4} - \frac{1}{8} \right)^2 p_{\min}^2 \delta } \right)$,
\begin{align*}
   \frac{1}{8} + \frac{3\mu(\tau)}{4} & \leq \frac{|\mathcal{D}_t\left(1, \frac{\tau^2}{128 L}, \mathbf{x}\right)|}{|\mathcal{D}_t\left( \frac{\tau^2}{128 L}, \mathbf{x}\right)|}
\end{align*}
\end{corollary}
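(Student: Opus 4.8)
The plan is to assemble the chain of estimates developed in the paragraphs immediately preceding the statement: the corollary is really a repackaging of those bounds once the threshold on $t$ is chosen large enough to kill the deviation term. Throughout I work on the event $\mathcal{E}_1(\delta) \cap \mathcal{E}_2(\delta)$ and write $N_t = \sum_{\ell=1}^{t} \mathbf{1}\{\mathbf{x}_\ell \in B(\mathbf{x}, \frac{\tau^2}{128L})\} = |\mathcal{D}_t(\frac{\tau^2}{128L}, \mathbf{x})|$. First I would invoke Lemma~\ref{lemma::linear_lower_bound_lemma_helper}, which on $\mathcal{E}_1(\delta)$ guarantees the linear growth $N_t \geq \frac{p_{\min}}{2} t$ as soon as $t \geq \frac{256}{p_{\min}^2}\ln(\frac{768}{p_{\min}^2\delta})$. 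Next, on $\mathcal{E}_2(\delta)$ the Hoeffding bound of Equation~\ref{equation::lower_bound_frequency} controls the positive-label sum $\sum_{\ell=1}^t y_\ell \mathbf{1}\{\mathbf{x}_\ell \in B(\mathbf{x}, \frac{\tau^2}{128L})\}$ from below by $\bar{y}(\mathbf{x}, \frac{\tau^2}{128L})\, N_t - 4\sqrt{t \ln(6\ln t/\delta)}$.

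Dividing through by $N_t$ and substituting the linear lower bound $N_t \geq \frac{p_{\min}}{2}t$ only in the denominator of the error term converts it into $\frac{8\sqrt{\ln(6\ln t/\delta)}}{p_{\min}\sqrt{t}}$, yielding
\[
\bar{y}\left(\mathbf{x}, \tfrac{\tau^2}{128L}\right) - \frac{8\sqrt{\ln(6\ln t/\delta)}}{p_{\min}\sqrt{t}} \;\leq\; \frac{|\mathcal{D}_t(1, \frac{\tau^2}{128L}, \mathbf{x})|}{|\mathcal{D}_t(\frac{\tau^2}{128L}, \mathbf{x})|}.
\]
Then I would use the hypothesis $f_{\boldsymbol{\theta}_\star}(\mathbf{x}) \geq \tau$ together with the $L$-Lipschitzness of $\mathcal{F}_\Theta$ (Assumption~\ref{assumption::lipschitz_F}) and the $\tau$-gap (Assumption~\ref{assumption::gap}): every $\mathbf{x}'$ in the ball satisfies $|f_{\boldsymbol{\theta}_\star}(\mathbf{x}') - f_{\boldsymbol{\theta}_\star}(\mathbf{x})| \leq \frac{\tau^2}{128} < \tau$, so $f_{\boldsymbol{\theta}_\star}(\mathbf{x}') > 0$ and hence, by the gap, $f_{\boldsymbol{\theta}_\star}(\mathbf{x}') \geq \tau$. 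Since $\mu$ is increasing, $\mu(f_{\boldsymbol{\theta}_\star}(\mathbf{x}')) \geq \mu(\tau)$ pointwise, and averaging gives $\bar{y}(\mathbf{x}, \frac{\tau^2}{128L}) \geq \mu(\tau)$.

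It then remains to force the deviation term below $\frac{\mu(\tau)}{4} - \frac{1}{8}$. Applying the same nested-logarithm manipulation as in the proof of Lemma~\ref{lemma::linear_lower_bound_lemma_helper} (with the role of $p_{\min}$ played by $(\frac{\mu(\tau)}{4}-\frac{1}{8})\,p_{\min}$) shows
\[
\frac{8\sqrt{\ln(6\ln t/\delta)}}{p_{\min}\sqrt{t}} \;\leq\; \frac{\mu(\tau)}{4} - \frac{1}{8}
\]
whenever $t \geq \frac{128}{(\frac{\mu(\tau)}{4}-\frac{1}{8})^2 p_{\min}^2}\ln\!\big(\frac{384}{(\frac{\mu(\tau)}{4}-\frac{1}{8})^2 p_{\min}^2\delta}\big)$. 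Combining the three displays gives $\frac{3\mu(\tau)}{4} + \frac{1}{8} \leq \frac{|\mathcal{D}_t(1, \frac{\tau^2}{128L}, \mathbf{x})|}{|\mathcal{D}_t(\frac{\tau^2}{128L}, \mathbf{x})|}$. Finally, taking $t$ past the larger of the two thresholds — which, since $(\frac{\mu(\tau)}{4}-\frac{1}{8})^2 < 1$, is exactly the stated bound $\frac{256}{(\frac{\mu(\tau)}{4}-\frac{1}{8})^2 p_{\min}^2}\ln(\frac{768}{(\frac{\mu(\tau)}{4}-\frac{1}{8})^2 p_{\min}^2\delta})$ — absorbs both the linear-growth requirement from $\mathcal{E}_1(\delta)$ and the deviation requirement, completing the proof.

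The bookkeeping is routine; the only genuinely delicate step is the final threshold computation, namely verifying that the single stated lower bound on $t$ suffices to drive $\frac{8\sqrt{\ln(6\ln t/\delta)}}{p_{\min}\sqrt{t}}$ below the target gap $\frac{\mu(\tau)}{4} - \frac{1}{8}$. This is where the estimates $x \geq \ln x$ and $x \geq 2\ln x$ from Lemma~\ref{lemma::linear_lower_bound_lemma_helper} must be reused carefully so that one threshold simultaneously dominates both constraints.
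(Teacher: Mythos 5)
Your proposal is correct and follows essentially the same route as the paper: the linear-growth bound from Lemma~\ref{lemma::linear_lower_bound_lemma_helper} on $\mathcal{E}_1(\delta)$, the Hoeffding bound of \eqref{equation::lower_bound_frequency} on $\mathcal{E}_2(\delta)$, substitution of the linear growth into the deviation denominator to get the $\frac{8\sqrt{\ln(6\ln t/\delta)}}{p_{\min}\sqrt{t}}$ term, the bound $\bar{y}\left(\mathbf{x}, \frac{\tau^2}{128L}\right) \geq \mu(\tau)$, and the reuse of the nested-logarithm estimates to show the stated threshold on $t$ drives the deviation below $\frac{\mu(\tau)}{4}-\frac{1}{8}$. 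Your only deviation is to spell out, via Lipschitzness and the $\tau$-gap, why $\bar{y} \geq \mu(\tau)$ holds over the whole ball, which the paper asserts without detail; this is a faithful and slightly more complete rendering of the same argument.
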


Similarly we can show that Equation~\ref{equation::upper_bound_frequency} implies 

\begin{align*}
  \frac{|\mathcal{D}_t\left(0, \frac{\tau^2}{128 L}, \mathbf{x}\right)|}{|\mathcal{D}_t\left( \frac{\tau^2}{128 L}, \mathbf{x}\right)|} &=   \frac{\sum_{\ell=1}^t   y_\ell \mathbf{1}\left\{ \mathbf{x}_\ell \in B\left(\mathbf{x}, \frac{\tau^2}{128L}\right)  \right\} }{ \sum_{\ell=1}^t  \mathbf{1}\left\{ \mathbf{x}_\ell \in B\left(\mathbf{x}, \frac{\tau^2}{128L}\right)  \right\}} \\
  &\leq \bar{y}\left(\mathbf{x}, \frac{\tau^2}{128 L}\right)  + \frac{4 \sqrt{t \ln\left( \frac{6\ln t}{\delta}\right) } }{ \sum_{\ell=1}^t  \mathbf{1}\left\{ \mathbf{x}_\ell \in B\left(\mathbf{x}, \frac{\tau^2}{128L}\right)  \right\}}
\end{align*}

Thus whenever $\mathbf{x} $ satisfies $f_{\boldsymbol{\theta}_\star}(\mathbf{x}) \leq -\tau$ then $\bar{y}\left(\mathbf{x}, \frac{\tau^2}{128 L}\right)  \leq \mu(-\tau)$. By Lemma~\ref{lemma::linear_lower_bound_lemma_helper} if $\mathcal{E}_1(\delta) \cap \mathcal{E}_2(\delta)$ holds, for all $t\geq $ Equation~\ref{equation::linear_growth_support} implies,

\begin{align*}
  \frac{|\mathcal{D}_t\left(0, \frac{\tau^2}{128 L}, \mathbf{x}\right)|}{|\mathcal{D}_t\left( \frac{\tau^2}{128 L}, \mathbf{x}\right)|} &=   \frac{\sum_{\ell=1}^t   y_\ell \mathbf{1}\left\{ \mathbf{x}_\ell \in B\left(\mathbf{x}, \frac{\tau^2}{128L}\right)  \right\} }{ \sum_{\ell=1}^t  \mathbf{1}\left\{ \mathbf{x}_\ell \in B\left(\mathbf{x}, \frac{\tau^2}{128L}\right)  \right\}} \\
  &\leq \bar{y}\left(\mathbf{x}, \frac{\tau^2}{128 L}\right)  + \frac{4 \sqrt{t \ln\left( \frac{6\ln t}{\delta}\right) } }{ \sum_{\ell=1}^t  \mathbf{1}\left\{ \mathbf{x}_\ell \in B\left(\mathbf{x}, \frac{\tau^2}{128L}\right)  \right\}} \\
  &\leq \mu(-\tau) + \frac{8 \sqrt{\ln\left( \frac{6\ln t}{\delta}\right) }  }{ p_{\min} \sqrt{t}}
\end{align*}

Following the same argument as in the derivation leading to Corollary~\ref{corollary::main_first_corollary_support}, the same logic as in the proof of Lemma~\ref{lemma::linear_lower_bound_lemma_helper} implies that if in addition $t \geq \frac{128}{\left(  \frac{1}{8} - \frac{\mu(-\tau)}{4}  \right)^2 p_{\min}^2 } \ln\left( \frac{384}{\left(   \frac{1}{8} - \frac{\mu(-\tau)}{4} \right)^2 p_{\min}^2 \delta } \right)$,

\begin{equation*}
    \frac{8 \sqrt{\ln\left( \frac{6\ln t}{\delta}\right) }  }{ p_{\min} \sqrt{t}} \leq \frac{1}{8} - \frac{\mu(-\tau)}{4}.
\end{equation*}

And therefore,
\begin{align*}
  \frac{|\mathcal{D}_t\left(0, \frac{\tau^2}{128 L}, \mathbf{x}\right)|}{|\mathcal{D}_t\left( \frac{\tau^2}{128 L}, \mathbf{x}\right)|} &\leq \frac{1}{8} +  \frac{3\mu(-\tau)}{4} . 
\end{align*}

Thus the following sister corollary to~\ref{corollary::main_first_corollary_support} holds,

\begin{corollary}\label{corollary::main_first_corollary_support_two}
If $\mathcal{E}_1(\delta) \cap \mathcal{E}_2(\delta)$ holds and $f_{\boldsymbol{\theta}}(\mathbf{x}) \leq -\tau$ then for all $t \geq \frac{256}{\left(  \frac{1}{8} - \frac{\mu(-\tau)}{4}  \right)^2 p_{\min}^2 } \ln\left( \frac{768}{\left(   \frac{1}{8} - \frac{\mu(-\tau)}{4} \right)^2 p_{\min}^2 \delta } \right)$ 
\begin{align*}
      \frac{|\mathcal{D}_t\left(0, \frac{\tau^2}{128 L}, \mathbf{x}\right)|}{|\mathcal{D}_t\left( \frac{\tau^2}{128 L}, \mathbf{x}\right)|} &\leq \frac{1}{8} +  \frac{\mu(-\tau)}{4} . 
\end{align*}

\end{corollary}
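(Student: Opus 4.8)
The plan is to establish this statement as the mirror image of Corollary~\ref{corollary::main_first_corollary_support}, interchanging the sign of $f_{\boldsymbol{\theta}_\star}(\mathbf{x})$ and the roles of the two labels. The quantity of interest is the empirical frequency of the minority outcome in the small ball around $\mathbf{x}$ -- the same ratio whose numerator $\sum_{\ell} y_\ell \mathbf{1}\{\mathbf{x}_\ell \in B(\mathbf{x}, \frac{\tau^2}{128L})\}$ appears in the display preceding the statement -- which ought to be small precisely because $\mathbf{x}$ is truly negative. Accordingly, rather than the lower-deviation bound \eqref{equation::lower_bound_frequency} that drove Corollary~\ref{corollary::main_first_corollary_support}, I would start from its upper-deviation companion \eqref{equation::upper_bound_frequency}, which holds on the event $\mathcal{E}_2(\delta)$.

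The first step is to control the conditional mean $\bar{y}(\mathbf{x}, \frac{\tau^2}{128L})$ from above. Since $f_{\boldsymbol{\theta}_\star}(\mathbf{x}) \leq -\tau$ and $f_{\boldsymbol{\theta}_\star}$ is $L$-Lipschitz, every $\mathbf{x}'$ in the ball $B(\mathbf{x}, \frac{\tau^2}{128L})$ satisfies $f_{\boldsymbol{\theta}_\star}(\mathbf{x}') \leq -\tau + \frac{\tau}{128}$, so averaging the per-point acceptance probabilities over the ball gives $\bar{y}(\mathbf{x}, \frac{\tau^2}{128L}) \leq \mu(-\tau)$ up to the negligible Lipschitz slack, exactly as in the paragraph immediately before the statement. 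Dividing \eqref{equation::upper_bound_frequency} by the count $\sum_{\ell} \mathbf{1}\{\mathbf{x}_\ell \in B(\mathbf{x}, \frac{\tau^2}{128L})\}$ then writes the empirical frequency as $\bar{y}(\mathbf{x}, \frac{\tau^2}{128L})$ plus an additive fluctuation equal to $4\sqrt{t\ln(6\ln t/\delta)}$ divided by this count.

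The second step is to make this fluctuation vanish in $t$. On $\mathcal{E}_1(\delta)$, Lemma~\ref{lemma::linear_lower_bound_lemma_helper} supplies the linear lower bound $\sum_{\ell} \mathbf{1}\{\mathbf{x}_\ell \in B(\mathbf{x}, \frac{\tau^2}{128L})\} \geq \frac{p_{\min}}{2}t$ once $t \geq \frac{256}{p_{\min}^2}\ln(\frac{768}{p_{\min}^2\delta})$, so the fluctuation is at most $\frac{8\sqrt{\ln(6\ln t/\delta)}}{p_{\min}\sqrt{t}}$. I would then rerun the Lemma~\ref{lemma::linear_lower_bound_lemma_helper} bookkeeping with the target slack $\frac{1}{8} - \frac{\mu(-\tau)}{4}$ (the positive quantity playing the role that $\frac{\mu(\tau)}{4} - \frac{1}{8}$ played in Corollary~\ref{corollary::main_first_corollary_support}), so that once $t$ exceeds the stated threshold $\frac{256}{(\frac{1}{8} - \frac{\mu(-\tau)}{4})^2 p_{\min}^2}\ln(\frac{768}{(\frac{1}{8} - \frac{\mu(-\tau)}{4})^2 p_{\min}^2 \delta})$ the fluctuation is itself at most $\frac{1}{8} - \frac{\mu(-\tau)}{4}$. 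Adding this to $\bar{y} \leq \mu(-\tau)$ then pins the empirical frequency from above, giving the claimed upper bound on the conditional label fraction; this bound in turn is exactly what is needed to verify hypothesis~3 of the reverse Lemma~\ref{lemma::logistic_objective_properties_reverse}.

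The main obstacle, and the step I would check most carefully, is the constant bookkeeping in this last combination. One must confirm $\frac{1}{8} - \frac{\mu(-\tau)}{4} > 0$ (true since $\mu(-\tau) < \frac{1}{2}$, which is precisely what makes the threshold well defined and positive) and verify that the stated threshold is the one Lemma~\ref{lemma::linear_lower_bound_lemma_helper} produces with $c_1 = \frac{128}{(\frac{1}{8} - \frac{\mu(-\tau)}{4})^2 p_{\min}^2}$ and the matching $c_2$. I would also reconcile the final coefficient against the display preceding the statement, since the combination adds $\bar{y} \leq \mu(-\tau)$ to the slack rather than replacing it; keeping the two contributions separate, and absorbing the $\frac{\tau}{128}$ Lipschitz slack into the constant exactly as in the positive case, is the one place where the symmetry with Corollary~\ref{corollary::main_first_corollary_support} is not automatic and the exact coefficient on $\mu(-\tau)$ must be recomputed rather than quoted.
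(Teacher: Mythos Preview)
Your proposal is correct and follows essentially the same route as the paper: start from the upper-deviation inequality~\eqref{equation::upper_bound_frequency}, bound $\bar{y}(\mathbf{x},\tfrac{\tau^2}{128L})\le\mu(-\tau)$ by Lipschitzness, invoke Lemma~\ref{lemma::linear_lower_bound_lemma_helper} to make the fluctuation $\frac{8\sqrt{\ln(6\ln t/\delta)}}{p_{\min}\sqrt t}$ at most $\tfrac{1}{8}-\tfrac{\mu(-\tau)}{4}$, and combine. Your caution about the final coefficient is in fact warranted: the paper's own derivation immediately preceding the corollary yields $\mu(-\tau)+\bigl(\tfrac{1}{8}-\tfrac{\mu(-\tau)}{4}\bigr)=\tfrac{1}{8}+\tfrac{3\mu(-\tau)}{4}$, not the $\tfrac{1}{8}+\tfrac{\mu(-\tau)}{4}$ stated in the corollary (and the left-hand side is written with $\mathcal{D}_t(0,\cdot)$ while the computation tracks $\sum_\ell y_\ell$, i.e.\ the label-$1$ count), so the discrepancy you anticipated is a typo in the paper rather than a gap in your argument.
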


Consider the sample points $\{ \mathbf{x}_t, y_t \}_{t=1}^\infty$ all produced i.i.d. from distribution $\mathcal{P}$. For any $t \in \mathbb{N}$ consider $\mathcal{U}_t$ the 'leave-point-$t$' process $\{ \mathbf{x}_\ell, y_\ell \}_{\ell \neq t }$ with skip $t$ indexing. 

We will apply Corollaries~\ref{corollary::main_first_corollary_support} and~\ref{corollary::main_first_corollary_support_two} to the $\{\mathcal{U}_t\}_{t =1}^\infty$ processes with a value of $\delta_t = \frac{\delta}{t^2}$ to obtain the following result,

\begin{lemma}\label{lemma::main_lemma_concentration}
With probability at least $1-6\delta$, for all $t \geq \frac{10^6}{\left(  \frac{1}{8} - \frac{\mu(-\tau)}{4}  \right)^2 p_{\min}^2}\log\left( \frac{233  }{\left(   \frac{1}{8} - \frac{\mu(-\tau)}{4} \right) p_{\min} \delta } \right) $

If $\mathbf{x}_t$ satisfies $f_{\boldsymbol{\theta}_\star}(\mathbf{x}_t  ) \geq \tau$ then,

\begin{align*}
   \frac{1}{8} + \frac{3\mu(\tau)}{4} & \leq \frac{|\mathcal{D}_t\left(1, \frac{\tau^2}{128 L}, \mathbf{x}_t\right)|}{|\mathcal{D}_t\left( \frac{\tau^2}{128 L}, \mathbf{x}_t\right)|}
\end{align*}

If $\mathbf{x}_t $ satisfies $f_{\boldsymbol{\theta}}(\mathbf{x}_t) \leq -\tau$ then,

\begin{align*}
      \frac{|\mathcal{D}_t\left(0, \frac{\tau^2}{128 L}, \mathbf{x}_t\right)|}{|\mathcal{D}_t\left( \frac{\tau^2}{128 L}, \mathbf{x}_t\right)|} &\leq \frac{1}{8} +  \frac{\mu(-\tau)}{4} . 
\end{align*}

\end{lemma}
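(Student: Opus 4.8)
The plan is to lift the fixed-point concentration bounds of Corollaries~\ref{corollary::main_first_corollary_support} and~\ref{corollary::main_first_corollary_support_two} to the \emph{random} query point $\mathbf{x}_t$ by means of a leave-one-out decoupling together with a union bound over $t$. The central difficulty is that the two corollaries hold for a fixed reference point $\mathbf{x}$, whereas we want them at $\mathbf{x} = \mathbf{x}_t$, which is itself one of the random samples driving the empirical counts $|\mathcal{D}_t(\cdot, \tfrac{\tau^2}{128L}, \mathbf{x}_t)|$. Applying the martingale machinery naively around $\mathbf{x}_t$ fails because the self-term $\mathbf{1}\{\mathbf{x}_t \in B(\mathbf{x}_t, R)\}$ is deterministically equal to $1$ and so destroys the mean-zero structure underlying the Hoeffding bound of Lemma~\ref{lemma::matingale_concentration_anytime}.

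First I would introduce, for each fixed $t$, the leave-point-$t$ process $\mathcal{U}_t = \{(\mathbf{x}_\ell, y_\ell)\}_{\ell \neq t}$, which is i.i.d.\ from $\mathcal{P}$ and, crucially, independent of $(\mathbf{x}_t, y_t)$. Since $\mathcal{D}_t$ only aggregates samples with index $\ell \leq t-1$, it is a function of $\mathcal{U}_t$ alone and never involves the self-term. Conditioning on $\mathbf{x}_t$, I would treat it as a fixed point and apply Corollaries~\ref{corollary::main_first_corollary_support} and~\ref{corollary::main_first_corollary_support_two} to the i.i.d.\ sequence $\mathcal{U}_t$ with confidence parameter $\delta_t = \delta/t^2$; by conditional independence, the resulting high-probability guarantee then holds unconditionally with the same probability, for each fixed $t$. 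Depending on whether $f_{\boldsymbol{\theta}_\star}(\mathbf{x}_t) \geq \tau$ or $f_{\boldsymbol{\theta}_\star}(\mathbf{x}_t) \leq -\tau$, which is exhaustive by the $\tau$-gap Assumption~\ref{assumption::gap}, the appropriate corollary yields the stated lower bound on $|\mathcal{D}_t(1,\cdot,\mathbf{x}_t)|/|\mathcal{D}_t(\cdot,\mathbf{x}_t)|$ or the upper bound on $|\mathcal{D}_t(0,\cdot,\mathbf{x}_t)|/|\mathcal{D}_t(\cdot,\mathbf{x}_t)|$.

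Next I would assemble the uniform-in-$t$ statement. For each $t$ the event $\mathcal{E}_1(\delta_t)\cap\mathcal{E}_2(\delta_t)$, for the process $\mathcal{U}_t$ about $\mathbf{x}_t$, fails with probability at most $3\delta_t$; taking a union bound over all $t \in \mathbb{N}$ gives total failure probability at most $3\delta\sum_{t\geq 1}t^{-2} = 3\delta\pi^2/6 < 6\delta$, matching the claimed $1-6\delta$ confidence. The two sign cases reuse the same pair of events $\mathcal{E}_1,\mathcal{E}_2$, so no extra factor is incurred; moreover, using $\mu(-\tau)=1-\mu(\tau)$ one checks that the two corollaries' threshold quantities coincide, $\tfrac{\mu(\tau)}{4}-\tfrac18 = \tfrac18 - \tfrac{\mu(-\tau)}{4}$, so a single threshold governs both cases.

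Finally I would convert the per-$t$ lower bound on the admissible time index into the explicit closed form in the statement. Substituting $\delta_t = \delta/t^2$ into the corollaries' threshold $\frac{256}{(\frac18-\frac{\mu(-\tau)}{4})^2 p_{\min}^2}\ln\!\big(\frac{768}{(\frac18-\frac{\mu(-\tau)}{4})^2 p_{\min}^2 \delta_t}\big)$ turns the requirement into an implicit inequality of the form $t \geq c_1\log(c_2 t)$, which I would resolve using the supporting Lemma~\ref{lemma::supporting_logarithm_lemma}; after loosening constants this yields the stated bound $t \geq \frac{10^6}{(\frac18-\frac{\mu(-\tau)}{4})^2 p_{\min}^2}\log\!\big(\frac{233}{(\frac18-\frac{\mu(-\tau)}{4}) p_{\min}\delta}\big)$. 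The main obstacle is the decoupling step: making precise that conditioning on $\mathbf{x}_t$ legitimately converts the random-center concentration into the fixed-center regime covered by the corollaries, and carefully bookkeeping the $\delta/t^2$ allocation so that the self-referential $\ln t$ term introduced by the union bound is absorbed by Lemma~\ref{lemma::supporting_logarithm_lemma}.
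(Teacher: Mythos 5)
Your proposal is correct and follows essentially the same route as the paper's proof: the same leave-point-$t$ processes $\mathcal{U}_t$ with confidence allocation $\delta_t = \delta/t^2$, the same observation that $\mu(\tau)+\mu(-\tau)=1$ makes the thresholds of Corollaries~\ref{corollary::main_first_corollary_support} and~\ref{corollary::main_first_corollary_support_two} coincide, the same union bound over $t\in\mathbb{N}$ yielding the $1-6\delta$ confidence, and the same resolution of the implicit condition $t \geq c_1\log(c_2 t)$ via Lemma~\ref{lemma::supporting_logarithm_lemma}. Your explicit conditioning-on-$\mathbf{x}_t$ justification of the decoupling step spells out what the paper leaves implicit, but the argument is the same.
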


\begin{proof}

Since $\mu(-\tau) + \mu(-\tau) = 1 $, as a direct consequence of Corollaries~\ref{corollary::main_first_corollary_support} and~\ref{corollary::main_first_corollary_support_two} we see that for any $t$ with probability at least $1-3\frac{\delta}{t^2}$, if $t$ is such that $t-1 \geq \frac{256}{\left(  \frac{1}{8} - \frac{\mu(-\tau)}{4}  \right)^2 p_{\min}^2 } \ln\left( \frac{768 t^2}{\left(   \frac{1}{8} - \frac{\mu(-\tau)}{4} \right)^2 p_{\min}^2 \delta } \right)$, then if $\mathbf{x}_t$ satisfies $f_{\boldsymbol{\theta}_\star}(\mathbf{x}_t  ) \geq \tau$,

\begin{align*}
   \frac{1}{8} + \frac{3\mu(\tau)}{4} & \leq \frac{|\mathcal{D}_{t-1}\left(1, \frac{\tau^2}{128 L}, \mathbf{x}_t\right)|}{|\mathcal{D}_{t-1}\left( \frac{\tau^2}{128 L}, \mathbf{x}_t\right)|}
\end{align*}

And if $\mathbf{x}_t $ satisfies $f_{\boldsymbol{\theta}}(\mathbf{x}_t) \leq -\tau$ then,

\begin{align*}
      \frac{|\mathcal{D}_{t-1}\left(0, \frac{\tau^2}{128 L}, \mathbf{x}_t\right)|}{|\mathcal{D}_{t-1}\left( \frac{\tau^2}{128 L}, \mathbf{x}_t\right)|} &\leq \frac{1}{8} +  \frac{\mu(-\tau)}{4} . 
\end{align*}

Since $2t \geq t-1$, the following inequality shows it is enough to provide a condition for $t$ being such that $t \geq \frac{1024}{\left(  \frac{1}{8} - \frac{\mu(-\tau)}{4}  \right)^2 p_{\min}^2 } \ln\left( \frac{768 t }{\left(   \frac{1}{8} - \frac{\mu(-\tau)}{4} \right) p_{\min} \delta } \right)$,

\begin{align*}
    t &\geq \frac{1024}{\left(  \frac{1}{8} - \frac{\mu(-\tau)}{4}  \right)^2 p_{\min}^2 } \ln\left( \frac{768 t }{\left(   \frac{1}{8} - \frac{\mu(-\tau)}{4} \right) p_{\min} \delta } \right) \\
    &\geq 2 \times \frac{256}{\left(  \frac{1}{8} - \frac{\mu(-\tau)}{4}  \right)^2 p_{\min}^2 } \ln\left( \frac{768 t^2}{\left(   \frac{1}{8} - \frac{\mu(-\tau)}{4} \right)^2 p_{\min}^2 \delta } \right)\\
    &
\end{align*}

This is satisfied for all $t$ such that,

$$t\geq \frac{4096}{\left(  \frac{1}{8} - \frac{\mu(-\tau)}{4}  \right)^2 p_{\min}^2}\log\left( \frac{12582912  }{\left(   \frac{1}{8} - \frac{\mu(-\tau)}{4} \right)^3 p^3_{\min} \delta } \right)$$

To simplify this expression we can take  $t \geq \frac{10^6}{\left(  \frac{1}{8} - \frac{\mu(-\tau)}{4}  \right)^2 p_{\min}^2}\log\left( \frac{233  }{\left(   \frac{1}{8} - \frac{\mu(-\tau)}{4} \right) p_{\min} \delta } \right) $

Taking a union bound over all $t \in \mathbb{N}$ (and thus over all processes $\mathcal{U}_t$) and using Lemma~\ref{lemma::supporting_logarithm_lemma} implies that for all $t$ such that $t \geq \frac{10^6}{\left(  \frac{1}{8} - \frac{\mu(-\tau)}{4}  \right)^2 p_{\min}^2}\log\left( \frac{233  }{\left(   \frac{1}{8} - \frac{\mu(-\tau)}{4} \right) p_{\min} \delta } \right) $ with probability at least $1-6\delta$ if $\mathbf{x}_t$ satisfies $f_{\boldsymbol{\theta}_\star}(\mathbf{x}_t  ) \geq \tau$,

\begin{align*}
   \frac{1}{8} + \frac{3\mu(\tau)}{4} & \leq \frac{|\mathcal{D}_{t-1}\left(1, \frac{\tau^2}{128 L}, \mathbf{x}_t\right)|}{|\mathcal{D}_{t-1}\left( \frac{\tau^2}{128 L}, \mathbf{x}_t\right)|}
\end{align*}

And if $\mathbf{x}_t $ satisfies $f_{\boldsymbol{\theta}}(\mathbf{x}_t) \leq -\tau$ then,

\begin{align*}
      \frac{|\mathcal{D}_{t-1}\left(0, \frac{\tau^2}{128 L}, \mathbf{x}_t\right)|}{|\mathcal{D}_{t-1}\left( \frac{\tau^2}{128 L}, \mathbf{x}_t\right)|} &\leq \frac{1}{8} +  \frac{\mu(-\tau)}{4} . 
\end{align*}

\end{proof}

Let's call the event alluded by in Lemma~\ref{lemma::main_lemma_concentration}  as $\mathcal{E}_\star$. Note that whenever $\mathcal{E}_\star$ holds, all the events $\mathcal{E}_1(\frac{\delta}{t^2}) \cap \mathcal{E}_2(\frac{\delta}{t^2})$ also hold for all $t$ (each corresponding to $\mathbf{x}_t$). 

In the ensuing discussion we'll condition on $\mathcal{E}_\star$.

We are ready to link these results with those of Lemma~\ref{lemma::logistic_objective_properties} to derive guarantees for the \PLOT{} algorithm. We'll use the following notations in the following discussion to simplify the notations,

\begin{align*}
    A_t &= \sum_{ \ell=1}^{t-1} \mathbf{1}\left\{ \mathbf{x}_\ell \in B(\mathbf{x}_t, \frac{\tau^2}{128L})  \right\} \\
    B_t &= \mathcal{P}_{\mathcal{X}}(\tilde{\mathbf{x}} \in B(\mathbf{x}_t, \frac{\tau}{128L}) ) \\
    C_t &= \bar{y}\left(\mathbf{x}_t, \frac{\tau^2}{128 L}\right) \\
    D_t &= \sum_{\ell=1}^{t-1}   y_\ell  \mathbf{1}\left\{ \mathbf{x}_\ell \in B\left(\mathbf{x}_t, \frac{\tau^2}{128L}\right)  \right\} 
\end{align*}

Recall that as stated in Theorem~\ref{theorem::theorem.PLOT} the number of pseudo-labels we will introduce at time $t$ equals
\begin{equation*}
    W_t = \max\left( 4 \sqrt{t \ln\left( \frac{6t^2\ln t}{\delta'}\right) }, \frac{\left(\frac{\mu(\tau)}{2} + \frac{1}{4}\right) A_t - D_t }{\frac{3}{4}-\frac{\mu(\tau)}{2} } \right)
\end{equation*}

Since $\mathcal{E}_1(\frac{\delta}{t^2}) \cap \mathcal{E}_2(\frac{\delta}{t^2})$ holds, Equation~\ref{equation::UCB_first} implies

\begin{equation*}
    t \mathcal{P}_{\mathcal{X}}(\tilde{\mathbf{x}} \in B(\mathbf{x}_t, \frac{\tau}{128L}) ) \leq   \sum_{ \ell=1}^{t-1} \mathbf{1}\left\{ \mathbf{x}_\ell \in B(\mathbf{x}_t, \frac{\tau^2}{128L})  \right\} + W_t
\end{equation*}

Similarly Equation~\ref{equation::lower_bound_frequency} implies, %

\begin{align*}%
\bar{y}\left(\mathbf{x}_t, \frac{\tau^2}{128 L}\right)  \sum_{\ell=1}^{t-1}  \mathbf{1}\left\{ \mathbf{x}_\ell \in B\left(\mathbf{x}_t, \frac{\tau^2}{128L}\right)  \right\}  \leq \sum_{\ell=1}^{t-1}   y_\ell  \mathbf{1}\left\{ \mathbf{x}_\ell \in B\left(\mathbf{x}_t, \frac{\tau^2}{128L}\right)  \right\} +  W_t
\end{align*}

Let's see that in case $\mathbf{x}_t$ is such that $f_{\boldsymbol{\theta}_\star}(\mathbf{x}_t ) \geq \tau$,  the empirical average of the pseudo-label augmented dataset (where effectively $\mathbf{x}_t$ has been added $W_t$ times) is always at least $\frac{1}{4} + \frac{\mu(\tau)}{2}$ thus satisfying the conditions of Lemma~\ref{lemma::logistic_objective_properties}. This will imply that \PLOT{} will accept $\mathbf{x}_t$.

The inequalities above are equivalent to the relationships 

\begin{align*}
    t B_t \leq A_t + W_t\\
    C A_t \leq D_t + W_t
\end{align*}

Recall $C_t \geq \mu(\tau)$. So we will instead use $\tilde{C} = \mu(\tau)$. Notice the pseudo-label augmented label ratio equals $    \frac{D_t+W_t}{A_t+W_t} 
$ and that $    \frac{D_t+W_t}{A_t+W_t} 
 \geq \tilde{C} - \underbrace{\left(\frac{\mu(\tau)}{2} - \frac{1}{4} \right)}_{\alpha}$ since 
 \begin{equation*}
    \frac{D_t+W_t}{A_t+W_t}  \geq  \tilde{C} - \underbrace{\left(\frac{\mu(\tau)}{2} - \frac{1}{4} \right)}_{\alpha}
\end{equation*}
Iff
\begin{equation*}
  D_t+W_t   \geq  \left(\tilde{C} - \alpha \right)(A_t+W_t) = A_t\tilde{C} + W_t\tilde{C} - \alpha A_t -\alpha W_t.
\end{equation*}

Which holds iff $W_t \geq \frac{A_t\tilde{C} - D_t -\alpha A_t}{1-\tilde{C} + \alpha}$. Thus we conclude that for such setting of $W_t$, if $\mathcal{E}_\star$ holds, any point $\mathbf{x}_t$ with $f_{\boldsymbol{\theta}_\star}(\mathbf{x}_t) \geq \tau$ will be accepted. Since $W_t$ is explicitly designed to satisfy this condition, we conclude that $\mathbf{x}_t$ will be accepted.

We are left with showing that points $\mathbf{x}_t$ such that $f_{\boldsymbol{\theta}_\star}(\mathbf{x}_t) \leq -\tau$ will not be spuriously accepted too many times. 

To show this result we will use the fact that for large $t$ the ratio $\frac{D_t+W_t}{A_t+ W_t} \approx \frac{D_t}{A_t}$ and for large $t$ this ratio is roughly equal to $\bar{y}\left(\mathbf{x}_t, \frac{\tau^2}{128 L}\right)$, which in turn is at most $\mu(-\tau)$.

Recall $C_t =\bar{y}\left(\mathbf{x}_t, \frac{\tau^2}{128 L}\right) $ and let $\alpha_t = \frac{1}{4} -  \frac{\bar{y}\left(\mathbf{x}_t, \frac{\tau^2}{128 L}\right)}{2} $. Notice $\alpha_t \geq 0$ since by Lipschitsness, all points $\mathbf{x}$ in $B(\mathbf{x}_t, \frac{\tau^2}{128 L})$ satisfy $f_{\boldsymbol{\theta}_\star}(\mathbf{x}) < 0$. In fact $\alpha_t \geq \frac{1}{4} - \frac{\mu(-\tau)}{2}$.

Similar to the discussion above we see that $\frac{D + W_t}{A_t + W_t } \leq {C}_t  + \alpha_t$ iff

\begin{equation*}
    D_t + W_t \leq A_t{C}_t + A_t\alpha_t + {C}_t W_t + \alpha_t W_t
\end{equation*}

Which holds if $W_t \leq  \frac{A_t {C}_t + A_t \alpha_t -D_t}{1-{C}_t- \alpha_t}$. Whenever this condition is triggered, the reverse version of Lemma~\ref{lemma::logistic_objective_properties} (Lemma~\ref{lemma::logistic_objective_properties_reverse}) will hold and therefore the whole batch will be rejected.

Recall that $W_t = \max\left(\frac{A_t\tilde{C} - D_t -\alpha A_t}{1-\tilde{C} + \alpha}, 4 \sqrt{t \ln\left( \frac{6t^2\ln t}{\delta}\right) } \right)$. Since $\frac{A_t\tilde{C} - D_t -\alpha A_t}{1-\tilde{C} + \alpha} < \frac{A_t {C}_t + A_t \alpha_t -D_t}{1- {C}_t - \alpha_t}$, the condition $W_t \leq \frac{A_t {C}_t + A_t \alpha_t -D_t}{1- {C}_t - \alpha_t}$ holds only when $  \sqrt{t \ln\left( \frac{6t^2\ln t}{\delta}\right) } \leq \frac{A_t {C}_t + A_t \alpha_t -D_t}{1-{C}_t - \alpha_t}$. It remains to see this condition starts holding for all $t$ large enough.

By Equations~\ref{equation::upper_bound_frequency},~\ref{equation::UCB_first}, and~\ref{equation::lower_bound_frequency} if $\mathcal{E}_\star$ holds,

\begin{align*}
    D_t &\leq C_t A_t + 4 \sqrt{t \ln\left( \frac{6t^2\ln t}{\delta}\right) } \\
        t B_t &\leq A_t + 4 \sqrt{t \ln\left( \frac{6t^2\ln t}{\delta}\right) }\\
    C_t A_t &\leq D_t + 4 \sqrt{t \ln\left( \frac{6t^2\ln t}{\delta}\right) }
\end{align*}

Then,

\begin{align*}
    \frac{A_t( C_t + \alpha_t) - D_t}{1-C_t-\alpha} &\geq \frac{  A_t\alpha_t  - 4 \sqrt{t \ln\left( \frac{6t^2\ln t}{\delta}\right) }  }{ 1-C_t-\alpha_t}\\
    &\geq \frac{t\alpha_t B_t + 4 (\alpha_t-1) \sqrt{t \ln\left( \frac{6t^2\ln t}{\delta}\right) }  }{1-C_t-\alpha_t}
\end{align*}
 For the last expression to be at least as large as $4\sqrt{t \ln\left( \frac{6t^2\ln t}{\delta}\right) }$, it is enough that $t$ satisfies
 
 \begin{equation*}
     t\geq \frac{16(2-C_t-2\alpha_t)^2}{\alpha_t^2 B_t^2} \ln\left(\frac{6t^2 \ln t}{\delta} \right) 
 \end{equation*}
 
 for which it is in turn enough to set,
 
 \begin{equation*}
     t\geq \frac{64(2-C_t-2\alpha_t)^2}{\alpha_t^2 B_t^2} \ln\left(\frac{6t }{\delta} \right) 
 \end{equation*}

Thus by Lemma~\ref{lemma::supporting_logarithm_lemma} this is satisfied for all $t$ such that,

\begin{equation*}
    t \geq \frac{256(2-C_t-2\alpha_t)^2}{\alpha_t^2 B_t^2} \ln \left(  \frac{1536(2-C_t-2\alpha_t)^2}{\alpha_t^2 B_t^2 \delta}   \right) 
\end{equation*}

Relaxing this via the inequality $B_t \geq p_{\min}$ and $0\leq 2-C_t-2\alpha_t \leq 2$, this condition holds for all $t $ such that (provided $\mathcal{E}_\star$ holds)

\begin{equation*}
    t \geq \frac{1024}{\alpha_t^2 p_{\min}^2} \ln \left(  \frac{6144}{\alpha_t^2 p_{\min}^2 \delta}   \right) 
\end{equation*}

We have concluded that whenever $\mathcal{E}_\star$ holds,

\begin{enumerate}
    \item for all $t$, if $\mathbf{x}_t$ satisfies $f_{\boldsymbol{\theta}_\star}(\mathbf{x}_t) \geq \tau$, the \PLOT{} Algorithm will accept point $\mathbf{x}_t$.
    \item For all $t \geq \frac{1024}{\alpha_t^2 p_{\min}^2} \ln \left(  \frac{6144}{\alpha_t^2 p_{\min}^2 \delta}   \right) $, if $f_{\boldsymbol{\theta}_\star}(\mathbf{x}_t) \leq  -\tau$ the \PLOT{} Algorithm will reject $\mathbf{x}_t$.
\end{enumerate}

By observing that regret is only collected when a mistake is made and mistakes are only made when a point $\mathbf{x}_t$ with $f_{\boldsymbol{\theta}_\star}(\mathbf{x}_t) \leq -\tau$ is accepted, incurring in an instantaneous regret of \emph{order}$\alpha_t$. Since for any level of $\alpha_t$ the total number of times such a point could have been accepted by \PLOT{}  is upper bounded by $\frac{1024}{\alpha^2 p_{\min}^2} \ln \left(  \frac{6144}{\alpha^2 p_{\min}^2 \delta}   \right) $ with probability at least $1-6\delta$ for all $t$, we conclude the regret is upper bounded by,

\begin{equation*}
    \mathcal{R}(t) \leq \max_{\ell\leq t} \mathcal{O}\left( \frac{1}{\alpha_\ell p_{\min}^2} \ln \left(  \frac{1}{\alpha^2_\ell  p_{\min}^2 \delta}   \right) \right) \leq \mathcal{O}\left( \frac{1}{\alpha p_{\min}^2} \ln \left(  \frac{1}{\alpha ^2 p_{\min}^2 \delta}   \right) \right)
\end{equation*}

Since $\alpha$ is of the order of $\tau$ this concludes the proof of Theorem~\ref{theorem::theorem.PLOT}.

\end{document}